\documentclass[11pt]{article}

\usepackage[final]{acl}

\usepackage{times}
\usepackage{latexsym}

\usepackage[T1]{fontenc}

\usepackage[utf8]{inputenc}

\usepackage{microtype}

\usepackage{inconsolata}

\usepackage{graphicx}
\usepackage{amsmath}
\usepackage{booktabs}
\usepackage{multirow}
\usepackage{amsmath}
\usepackage{amssymb}
\usepackage{amsthm}
\theoremstyle{definition}
\newtheorem{proposition}{Proposition}
\newtheorem{remark}{Remark}
\usepackage{tcolorbox}
\tcbuselibrary{breakable}
\usepackage{fvextra}
\RecustomVerbatimEnvironment{verbatim}{Verbatim}{breaklines=true}
\usepackage{colortbl}

\title{Creative Generation via Multi-Agent Debate: Does Debate Suppress Diversity?}

\author{
  \textbf{Tien Anh Nguyen}, \textbf{Khanh-Binh Nguyen}, \textbf{Van Dai Do}, \\
  \textbf{Svetha Venkatesh}, \textbf{Hung Le} \\
  Deakin Applied Artificial Intelligence Initiative, Deakin University, Australia \\
  \texttt{\{tien.nguyen,binh.nguyen,v.do,svetha.venkatesh,thai.le\}@deakin.edu.au}
}

\begin{document}
\maketitle
\begin{abstract}
Creative generation tasks, such as narrative writing and scientific ideation, demand both high-quality outputs and distinct responses across independent runs to maximize exploration. Multi-Agent Debate (MAD) has shown strong quality gains on factual and reasoning tasks, making it a natural candidate for creative generation. However, we find its convergence-driven design actively suppresses output diversity across independent runs, creating an inherent trade-off with creative tasks. We theoretically show that preserving diversity among agents within each debate session is a necessary condition for achieving diverse outputs across independent runs. Building on this finding, we propose \textbf{Creative-MAD}, which introduces two synergistic interventions to sustain agent divergence. Specifically, Cognitive Lens Assignment counters identity drift by anchoring each agent to a distinct and persistent cognitive mode, while Embedding-based Peer Selection counters majority pull by limiting each agent's context to its most semantically distant peers. Experiments across four creative benchmarks demonstrate that Creative-MAD significantly enhances both lexical and semantic diversity while maintaining MAD's output quality.
\end{abstract}

\section{Introduction}

Creative generation encompasses diverse tasks including narrative writing, scientific ideation, and argumentative composition~\cite{wu2026writingbench, ruan2026evaluating, huber-niklaus-2025-clear}. Unlike factual or reasoning tasks where a single correct answer exists, creative tasks are open-ended and fundamentally require output diversity. A system asked to generate responses to the same prompt across multiple runs should return meaningfully distinct outputs each time to explore different directions of the creative space~\cite{zhang2025noveltybench, ismayilzada-etal-2025-creative}. This diversity requirement makes creative generation challenging because it demands both high-quality outputs and meaningful variation across independent generations.

Multi-Agent Debate (MAD) has emerged as a promising paradigm for improving Large Language Model (LLM) output quality, where multiple model instances iteratively critique and refine each other's responses~\cite{du2024improving, liang2024encouraging, li2024improving}. The appeal is intuitive: exposure to competing perspectives surfaces errors, challenges weak reasoning, and drives convergence toward higher-quality outputs across debate rounds. Recent work has validated this appeal, demonstrating consistent quality gains over single-agent generation across factual question answering, arithmetic reasoning, and commonsense inference~\cite{choi2026debate}, making MAD a natural candidate for creative generation where output quality equally matters.

However, MAD's convergence-driven design introduces a fundamental tension in creative settings. Within each debate session, agents are progressively drawn toward a shared perspective as they read and respond to each other across rounds, causing diversity among agent responses to decay before a final output is selected. Since each independent run undergoes this same within-session diversity decay in isolation, the final outputs across runs end up clustering in the same narrow region, suppressing the diversity that creative generation requires. Yet such a quality-diversity trade-off has gone unnoticed in prior work, as MAD has been evaluated exclusively on output quality without examining whether independently generated outputs are meaningfully distinct from one another. We formalize the relationship between within-session diversity decay and output homogeneity across runs, showing that preserving diversity among agents within each debate session is a necessary condition for achieving diverse outputs across runs, and consequently, the only controllable lever for improving output diversity. 

Motivated by this finding, we propose \textbf{Creative-MAD}, a diversity-preserving multi-agent debate framework that addresses within-session convergence through two complementary mechanisms. The first targets \textit{identity drift}: when agents share identical prompt baselines, minor sampling variations fail to provide a permanent reasoning anchor, leaving multi-round interactions to homogenize individual agent behaviors. \emph{Cognitive Lens Assignment (CLA)} addresses this by equipping each agent with a distinct and persistent cognitive mode throughout debate, designed specifically for creative generation, such as analytical, emotional, or analogical reasoning. The second targets \textit{majority pull}: when each agent receives the full peer pool, dominant perspectives gradually erode minority viewpoints. \emph{Embedding-based Peer Selection (EPS)} addresses this by restricting each agent's context to its $k$ most semantically distant peers, replacing majority pull with cross-perspective stimulus. Our evaluation demonstrates that Creative-MAD significantly enhances cross-run diversity across both semantic and lexical dimensions without sacrificing the generation quality of standard MAD.

We summarize our contributions as follows:
\begin{itemize}

    \item We identify a fundamental quality-diversity trade-off in MAD for creative generation: while debate consistently improves output quality, the convergence dynamics among agents actively suppress output diversity across independent runs.

    \item We establish the theoretical basis of this trade-off, showing that preserving diversity among agents within each debate session is a necessary condition for achieving diverse outputs across multiple sessions.

    \item We propose Creative-MAD, combining Cognitive Lens Assignment and Embedding-based Peer Selection to counter identity drift and majority pull within debate sessions, maintaining output quality on par with standard MAD while substantially improving output diversity across a diverse suite of creative generation benchmarks.

\end{itemize}

\section{Related Work}
\subsection{Multi-Agent Debate}
MAD has demonstrated consistent quality gains on factual and reasoning tasks~\cite{du2024improving, choi2026debate}, where convergence toward a single correct answer is the desired outcome. To strengthen this convergence, prior work has explored diverse agent configurations through two main strategies: model-level heterogeneity, which leverages different LLM families~\cite{chen2024reconcile}, and role-level heterogeneity, which injects distinct personas or reasoning methods into agents of the same model~\cite{liang2024encouraging, wang2024unleashing}. However, these works treat diversity as a means to improve convergence toward a correct answer, not as an output requirement across independent runs. 

MAD's behavior on creative tasks, where output diversity is a feature rather than a flaw, has received limited attention \cite{becker2025mallm}. While \citet{hu2025debate} achieves argument diversity through per-run persona resampling, the intrinsic diversity decay during the debate process under a fixed prompt remains largely unaddressed. Consequently, investigating such decay as a primary failure mode in creative environments remains an unexplored gap, leaving a need for mechanisms that balance quality with the preservation of diverse creative trajectories.

\subsection{LLM Creative Generation and Evaluation}
Recent benchmarks have expanded creative evaluation into specialized domains, including physical problem-solving \cite{tian2024macgyver}, scientific ideation \cite{ruan2026evaluating}, and narrative writing \cite{li2025from}. LLM-as-a-Judge (LLMaaJ) frameworks \cite{wu2026writingbench}, pairwise Elo-based systems \cite{paech2023eq}, and Win Rate \cite{li2025from} have become widely adopted for open-ended evaluation. A critical challenge in creative generation is mode collapse, where LLMs produce repetitive outputs for the same prompt \cite{zhang2025noveltybench}. Existing work addresses this via diversity-aware training \cite{chung2025modifying} or quality-weighted diversity metrics \cite{shypula2025evaluating}, but the multi-agent setting remains underexplored.

\section{Problem Formulation}
\label{sec:problem}

\subsection{MAD Formulation}

We consider a multi-agent debate system comprising $N$ agents operating over $R$ rounds to produce a single response for a given input query $x$. Formally, let $\mathcal{A} = \{a_1, a_2, \ldots, a_N\}$ denote the set of agents, where each agent is an instance of the same language model $\mathcal{M}$. We focus on this single-model setting to cleanly isolate the effect of debate dynamics from inherent capability differences across model families. We follow the common simultaneous-talk protocol~\cite{chan2024chateval, choi2026debate}, where all agents update concurrently based on the previous round's responses.

At round $t = 0$, each agent $a_i$ independently generates an initial response $y_i^{(0)} = \mathcal{M}(x)$. At each subsequent round $t \in \{1, \ldots, R\}$, each agent refines its response by conditioning on both the original query and a subset of peer responses from the previous round:
\begin{equation}
    y_i^{(t)} = \mathcal{M}\left(x,\ \mathcal{P}_i^{(t-1)}\right)
\end{equation}
where $\mathcal{P}_i^{(t-1)} \subseteq \{y_j^{(t-1)} : j \neq i\}$ denotes the set of peer responses visible to agent $i$ at round $t$. In standard MAD, $\mathcal{P}_i^{(t-1)}$ contains all peer responses. After $R$ debate rounds, a consensus mechanism selects a single final answer $\hat{y}$ from the final agent responses $\{y_i^{(R)}\}_{i=1}^N$. To evaluate robustness and diversity across generations, we run $G$ independent sessions for each query, yielding a set of final outputs $\{\hat{y}_g\}_{g=1}^G$.

\subsection{Creative Task Setting}

Given an input query $x$, the goal is to generate a response $y$ evaluated along two complementary dimensions: \textit{instance-level quality} (coherence, creativity, and task alignment) and \textit{set-level diversity} (the degree to which multiple generated outputs explore distinct regions of the output space).

Formally, given $G$ independent sessions on the same fixed query $x$ producing outputs $\{\hat{y}_g\}_{g=1}^G$, the objective is to jointly maximize instance-level quality $\frac{1}{G}\sum_{g=1}^G\text{Quality}(\hat{y}_g)$ and set-level diversity $\text{Diversity}(\{\hat{y}_g\}_{g=1}^G)$ across all sessions, where the latter requires outputs to be meaningfully distinct from one another, a property we term \textit{inter-session diversity}.

\paragraph{Instance-level quality.} We measure quality via absolute scores using LLMaaJ~\cite{ruan2026evaluating}, and average pairwise win rate~\cite{li2025from} across all method pairs.

\paragraph{Set-level diversity.} We measure diversity along two axes. For semantic diversity, we adopt the Vendi Score~\citep{friedman2023the}, defined as:
\begin{equation}
    \text{VS} = \exp\left(-\operatorname{tr}
    \left(\bar{K} \log \bar{K}\right)\right), \quad 
    \bar{K} = \frac{K}{G}
\end{equation}
where $\operatorname{tr}(\cdot)$ denotes the matrix trace, $K \in \mathbb{R}^{G \times G}$ is the cosine similarity kernel matrix with entries $K_{gg'} = {e_g \cdot e_{g'}} / {\|e_g\|\|e_{g'}\|}$ for $g, g' \in \{1, \ldots, G\}$, and $e_g$ denotes the embedding of final output $\hat{y}_g$. The score ranges from $1$ for identical outputs to $G$ for fully distinct outputs. For lexical diversity, we use Div-BLEU~\citep{ruan2025g2}, calculated as $1 - \text{Self-BLEU}$~\citep{zhu2018texygen}.

\section{Why MAD Suppresses Diversity in Creative Generation}
\label{sec:theorical}

\begin{figure*}[t]
    \centering
    \includegraphics[width=\textwidth]{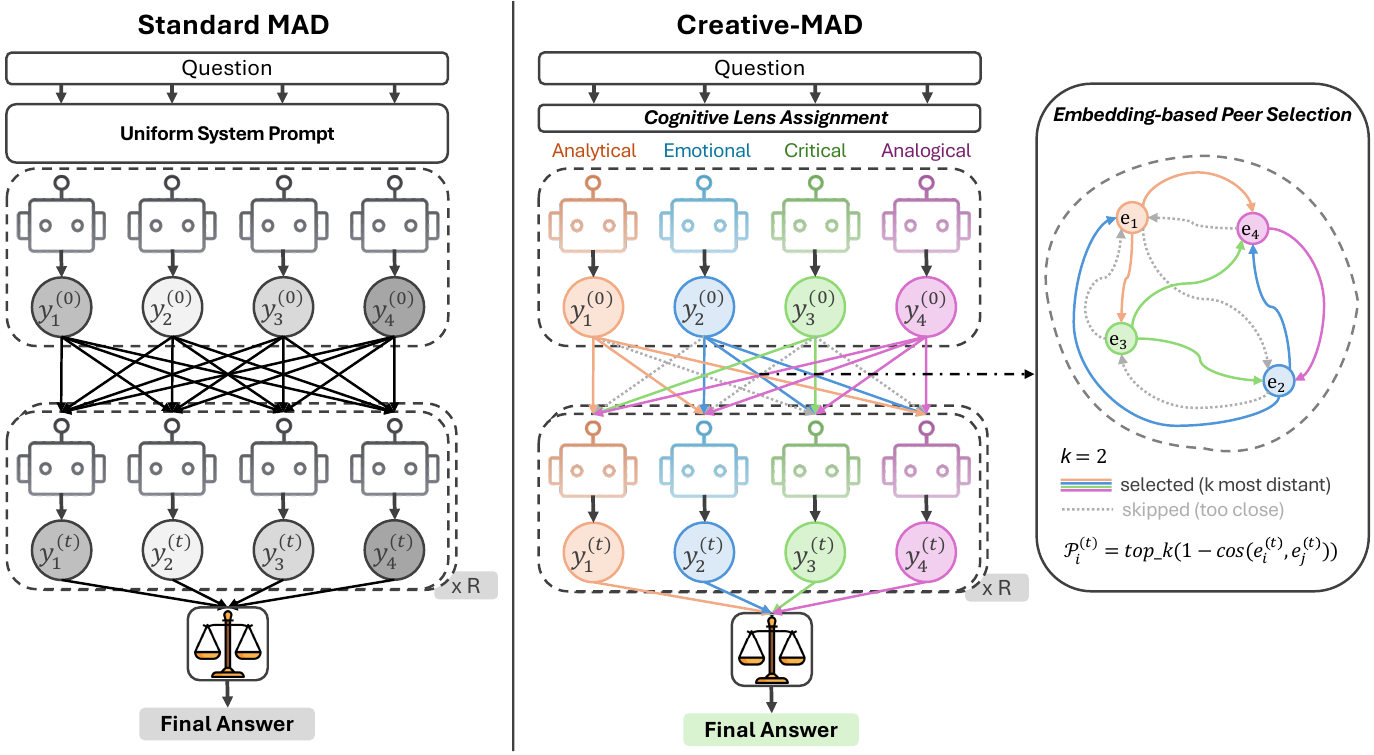}
    \caption{Comparison of Standard MAD and Creative-MAD. Standard MAD's uniform prompts and full peer connectivity drive convergence across debate rounds. Creative-MAD addresses this via Cognitive Lens Assignment (CLA), anchoring each agent to a distinct cognitive mode, and Embedding-based Peer Selection (EPS), restricting each agent to its $k$ most semantically distant peers.}
    
    \label{fig:creative_mad}
\end{figure*}

While MAD's convergence dynamics yield consistent quality gains over single-agent baselines, they introduce a fundamental quality-diversity trade-off in creative settings: the same inter-agent communication that drives quality improvement actively suppresses output diversity across independent runs.

\paragraph{Intra-session Diversity Decay.}
We define \textit{intra-session diversity} as the semantic spread among the $N$ agent responses $\{y_i^{(t)}\}_{i=1}^{N}$ at debate round $t$, measured by the semantic diversity score over agent outputs within a single session. Standard MAD drives monotonic decay of intra-session diversity through two distinct mechanisms. First, identity drift: as agents are initialized with identical prompts and differ only through temperature sampling, they lack a stable reasoning anchor, causing them to drift toward group consensus as they synthesize peer responses across rounds. Second, majority pull: full peer connectivity exposes each agent to the majority signal, causing dominant perspectives to gradually override minority viewpoints and homogenize agent responses. As intra-session diversity decays, the pool from which the consensus mechanism selects the final output $\hat{y}_g$ becomes increasingly concentrated, causing the final outputs $\{\hat{y}_g\}_{g=1}^G$ across independent sessions to cluster in the same narrow region of the output space.

\paragraph{Theoretical Analysis.}
We formalize the relationship between intra-session diversity decay and inter-session output homogeneity as follows.

\begin{proposition}[Intra-to-Inter Diversity]
\label{prop:intra_inter}
Let $K \in \mathbb{R}^{G \times G}$ be the cosine similarity kernel matrix over final outputs $\{\hat{y}_g\}_{g=1}^{G}$ and $\mathcal{D}_{\mathrm{inter}} = \exp(H(K))$ the inter-session diversity score where
\begin{equation}
    H(K) = -\operatorname{tr}(\bar{K} \log \bar{K})
\end{equation}
is the kernel entropy~\cite{friedman2023the}. A decrease in intra-session diversity necessarily induces a decrease in $\mathcal{D}_{\mathrm{inter}}$, and preserving intra-session diversity is a necessary condition for maintaining high inter-session diversity under the fixed-prompt setting. 
\end{proposition}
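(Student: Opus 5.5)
The plan is to make the informal statement precise by modeling the final outputs as random draws, then bound the inter-session Vendi score by a quantity that collapses when intra-session diversity collapses. Under the fixed-prompt setting, each session $g$ is an i.i.d. realization of the same stochastic process (same $x$, same $\mathcal{M}$, same protocol). The final output $\hat{y}_g$ is chosen from the round-$R$ pool $\{y_i^{(R)}\}_{i=1}^N$, so its normalized embedding $u_g = e_g/\|e_g\|$ lies within the spread of that pool. I will measure intra-session diversity by a within-session dispersion $\delta_{\mathrm{intra}}$. The natural choice is the expected maximal pairwise cosine distance among round-$R$ agent embeddings, or equivalently $1$ minus their minimal cosine similarity; a Vendi score over those embeddings is then bounded by this quantity. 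The cross-session spread of $u_g$ then comes from only two sources: (i) the intra-session dispersion, which determines how far $\hat{y}_g$ can sit from the session's centroid, and (ii) the variability of that centroid across sessions.

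The key step is to show that, under the fixed prompt, the centroid variability is itself controlled by $\delta_{\mathrm{intra}}$. The argument runs as follows. At $t=0$ the $N$ agents in one session and the agents in a different session are exchangeable draws from $\mathcal{M}(x)$. Hence the expected distance between agents in different sessions equals the expected distance between agents in the same session at $t=0$. Debate updates are a common map applied within each session, and they contract the within-session spread (this is the intra-session decay). I would argue, as an assumption on $\mathcal{M}$, that they do not create cross-session spread beyond what the within-session spread supplies. With that in place, a triangle inequality gives $\mathbb{E}\|u_g-u_{g'}\|^2 \le C\,\delta_{\mathrm{intra}}$ for some constant $C$.

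Next comes the translation to the kernel. Since $\|u_g-u_{g'}\|^2 = 2(1-K_{gg'})$, the matrix $\bar K$ is close to the rank-one matrix $\frac1G\mathbf{1}\mathbf{1}^\top$, with $\|\bar K - \frac1G \mathbf{1}\mathbf{1}^\top\|_F \le \frac{C'}{G}\sum_{g\ne g'}(1-K_{gg'})$. The eigenvalues of $\bar K$ form a probability vector, because $\bar K$ is PSD with trace $1$. By Weyl's inequality, the top eigenvalue $\lambda_1 \ge 1-\epsilon$ with $\epsilon = O(\delta_{\mathrm{intra}})$. A Fannes-type continuity bound for entropy then gives $H(K) \le h(\epsilon)+\epsilon\log(G-1)$, where $h$ is the binary entropy. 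So $\mathcal{D}_{\mathrm{inter}} \le \exp\big(h(\epsilon)+\epsilon\log G\big)$, which is monotone increasing in $\epsilon$ and tends to $1$ as $\delta_{\mathrm{intra}}\to 0$. The upper bound on $\mathcal{D}_{\mathrm{inter}}$ therefore decreases with intra-session diversity. The contrapositive gives necessity: if $\mathcal{D}_{\mathrm{inter}}$ is to stay above some level $D>1$, then $\delta_{\mathrm{intra}}$ must stay bounded away from zero.

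The main obstacle is the centroid step. Nothing in the setup prevents a single session from collapsing to a point whose location varies randomly across sessions, and such a system would have zero intra-session diversity yet high inter-session diversity. I would handle this by stating the result as a statement about the upper bound rather than the exact score, and by making the fixed-prompt assumption explicit. That assumption says that the only source of cross-session randomness is the sampling that also generates within-session spread, so any collapse point is a deterministic function of the common conditioning up to noise of the same magnitude as the within-session spread. In practice I would impose a Lipschitz-type condition on the debate update, so that cross-session distances at round $R$ are bounded by a multiple of within-session distances at round $R$. The remaining steps are routine matrix perturbation and entropy continuity arguments.
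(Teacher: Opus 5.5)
Your argument follows the same chain as the paper's proof in Appendix~\ref{appendix:proof}. Within-session concentration makes the selected outputs $\hat y_g$ close across sessions, so the off-diagonal entries of $K$ tend to $1$ and $\bar K\to\frac1G\mathbf{1}\mathbf{1}^\top$. Then $\lambda_1\to 1$, $H(K)\to 0$ and $\mathcal{D}_{\mathrm{inter}}\to 1$. The paper works only at the level of limits (``$\lambda_1\approx 1$, remaining $\lambda_i\approx 0$''). It simply asserts the first link (``as a result, the final outputs become semantically closer to one another across sessions''), and it asserts a monotonicity claim it never argues.

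You depart from it in two useful ways. First, you make the kernel-to-entropy step quantitative: the Frobenius bound holds with $C'=1$, then Weyl, then the grouping bound $H\le h(1-\lambda_1)+(1-\lambda_1)\log(G-1)$. This gives an explicit cap $\mathcal{D}_{\mathrm{inter}}\le\exp\big(h(\epsilon)+\epsilon\log(G-1)\big)$ and a clean contrapositive for the necessity half, which the paper's limit statement does not provide. Second, you correctly see that the intra-to-inter link does not follow from the setup. Your counterexample is genuine: a voter-model-type dynamic reaches consensus on a randomly drawn initial opinion, so intra-session diversity vanishes while the consensus is distributed like a single agent's output. So the proposition is not provable as literally stated, and the paper's proof has exactly this unacknowledged hole.

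The cost of your route is that the Lipschitz-type coupling you impose is not a mild regularity condition. Bounding cross-session distances at round $R$ by a multiple of within-session distances carries the whole substantive content of the proposition. What you get is a conditional, decreasing upper bound, not ``necessarily induces a decrease''; state it that way, as you propose.

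Three points to tidy.
\begin{itemize}
\item Your coupling bound is on $\mathbb{E}\|u_g-u_{g'}\|^2$, while Weyl acts on the realized $K$. So you only get $\mathbb{E}\,\epsilon\le\frac{(G-1)C}{2}\delta_{\mathrm{intra}}$. You must pass to $\mathcal{D}_{\mathrm{inter}}$ either via Markov (convergence in probability) or via Jensen on the concave bound for $H$.
\item For the Jensen route, cap the bound at $\epsilon=(G-1)/G$, beyond which $h(\epsilon)+\epsilon\log(G-1)$ stops increasing.
\item Your $\delta_{\mathrm{intra}}$ (maximal pairwise cosine distance) is not the paper's intra-session measure, which is a Vendi score over the $N$ agent outputs, and the two are not monotonically related. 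Four identical embeddings plus one orthogonal outlier give intra-Vendi $\approx 1.65$ with spread $1$. Five embeddings with pairwise cosine $0.6$ give intra-Vendi $\approx 2.92$ with spread only $0.4$. Since the judge may select the outlier, your max-spread quantity is the one the argument actually needs. But your bound then says nothing about a decrease in the paper's intra-session Vendi score without a further assumption.
\end{itemize}
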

\begin{proof}
    See Appendix~\ref{appendix:proof}.
\end{proof}

\section{Creative-MAD: Diversity-Preserving Multi-Agent 
Debate for Creative Generation}
\label{sec:method}

Standard MAD suppresses intra-session diversity through two mechanisms: identity drift and majority pull. We propose \textbf{Creative-MAD}, with two complementary interventions targeting each mechanism: CLA addresses identity drift by anchoring each agent to a distinct processing mode throughout debate, while EPS addresses majority pull by filtering the peer signal to amplify cross-perspective stimulus. An overview of the full architecture is illustrated in Figure~\ref{fig:creative_mad}.

\subsection{Cognitive Lens Assignment (CLA)}

Creative generation is inherently multi-dimensional: a strong response to the same prompt may emerge from an emotionally resonant narrative, a logically structured argument, a counterintuitive reframing, or an unexpected cross-domain analogy, each constituting a legitimate and distinct creative direction. This multiplicity underlies the diversity requirement central to creative generation, yet standard MAD collapses it: since all agents share identical system prompts and differ only through temperature sampling, there is no structural mechanism to maintain distinctive generative orientations across debate rounds.

We address this by assigning each agent a distinct cognitive lens, a persistent system prompt that explicitly defines a specific cognitive processing mode. Unlike domain personas that specify ``what'' an agent knows (e.g., scientist, lawyer), cognitive lenses specify ``how'' an agent processes information, covering complementary modes such as logical analysis, emotional reasoning, critical examination, analogical thinking, and practical evaluation. Cognitive lenses are assigned deterministically across agents, ensuring full cognitive coverage within each session. The lens is explicitly reinforced in the debate prompt at every round, preventing the identity drift that occurs when agents are exposed to peer responses during synthesis. Full lens descriptions are provided in Appendix~\ref{appendix:cognitive_lenses}.

\begin{table*}[t]
\centering
\small
\caption{Instance-level quality results across four creative benchmarks. We report rubric-based absolute scores (mean $\pm$ std) and average pairwise win rate (\%) across all method pairs. Higher is better. \textbf{Bold} indicates the best result(s); multiple entries may be bolded if not statistically significantly different (paired $t$-test, $p < 0.05$).}
\label{tab:quality}
\resizebox{\textwidth}{!}{%
\begin{tabular}{l cc cc cc cc}
\toprule
\multirow{2}{*}{\textbf{Method}} 
& \multicolumn{2}{c}{\textbf{LiveIdea}} 
& \multicolumn{2}{c}{\textbf{AAE}} 
& \multicolumn{2}{c}{\textbf{MacGyver}} 
& \multicolumn{2}{c}{\textbf{AH v2.0}} \\
\cmidrule(lr){2-3} \cmidrule(lr){4-5} \cmidrule(lr){6-7} \cmidrule(lr){8-9}
& Score & Win\% & Score & Win\% & Score & Win\% & Score & Win\% \\
\midrule
\multicolumn{9}{l}{\textit{Qwen3-8B}} \\
\cmidrule(r){1-9}
Direct       & 66.5 $\pm$ 0.1 & 7.6  & 73.7 $\pm$ 0.1 & 4.0  & 64.2 $\pm$ 0.8 & 6.1  & 74.9 $\pm$ 0.3 & 10.6 \\
Self-Refine  & 69.7 $\pm$ 0.3 & 29.3 & 74.7 $\pm$ 0.1 & 8.9  & 68.2 $\pm$ 0.1 & 18.5 & 76.6 $\pm$ 0.2 & 17.3 \\
Voting       & 67.2 $\pm$ 0.1 & 11.8 & 74.2 $\pm$ 0.4 & 5.8  & 66.9 $\pm$ 0.3 & 12.7 & 76.5 $\pm$ 0.3 & 19.0 \\
Homo MAD     & \textbf{74.0} $\pm$ 0.1 & \textbf{52.7} & 80.6 $\pm$ 0.5 & 57.1 & \textbf{76.3} $\pm$ 0.3 & 63.4 & 80.2 $\pm$ 0.3 & 44.3 \\
Hetero MAD   & 72.2 $\pm$ 0.2 & 51.8 & \textbf{82.3} $\pm$ 0.3 & 63.7 & \textbf{76.6} $\pm$ 0.1 & \textbf{65.3} & \textbf{81.4} $\pm$ 0.3 & \textbf{51.2} \\
\rowcolor{blue!10} \textbf{Creative-MAD} & 71.5 $\pm$ 0.1 & 49.4 & \textbf{82.8} $\pm$ 0.4 & \textbf{65.1} & \textbf{76.4} $\pm$ 0.3 & \textbf{65.6} & 80.4 $\pm$ 0.9 & 45.7 \\
\midrule
\multicolumn{9}{l}{\textit{Gemma-3-12B-Instruct}} \\
\cmidrule(r){1-9}
Direct            & 75.3 $\pm$ 0.2 & 8.9  & 82.1 $\pm$ 0.1 & 5.3  & 76.9 $\pm$ 0.3 & 10.4 & 78.0 $\pm$ 0.3 & 11.6 \\
Self-Refine       & 81.8 $\pm$ 0.4 & 26.7 & 83.9 $\pm$ 0.3 & 8.9  & 78.9 $\pm$ 0.4 & 23.8 & 80.2 $\pm$ 0.1 & 23.4 \\
Voting            & 76.3 $\pm$ 0.7 & 15.7 & 82.4 $\pm$ 0.2 & 9.6  & 77.4 $\pm$ 0.2 & 19.9 & 80.9 $\pm$ 0.2 & 23.9 \\
Homo MAD          & \textbf{82.5} $\pm$ 0.3 & \textbf{55.6} & 84.9 $\pm$ 0.5 & 62.1 & \textbf{81.1} $\pm$ 0.5 & \textbf{66.1} & \textbf{84.3} $\pm$ 0.3 & \textbf{68.2} \\
Hetero MAD        & 81.9 $\pm$ 0.2 & 51.1 & \textbf{85.0} $\pm$ 0.3 & \textbf{65.8} & \textbf{81.0} $\pm$ 0.4 & 65.4 & \textbf{84.1} $\pm$ 0.3 & 67.5 \\
\rowcolor{blue!10} \textbf{Creative-MAD}      & \textbf{82.2} $\pm$ 0.4 & \textbf{55.4} & \textbf{85.1} $\pm$ 0.4 & \textbf{65.7} & 80.2 $\pm$ 0.2 & 62.9 & \textbf{84.2} $\pm$ 1.2 & \textbf{68.0} \\
\bottomrule
\end{tabular}%
}
\end{table*}

\subsection{Embedding-based Peer Selection (EPS)}

Even with distinct cognitive lenses, agents remain susceptible to majority-pull if each agent receives the full set of peer responses during debate. When all agents observe the same peer pool, the most frequently represented perspectives dominate synthesis, gradually eroding cognitive distinctiveness regardless of lens assignment.

We address this through EPS, which filters the peer signal each agent receives. At each debate round, all agent responses are embedded into a shared semantic space via an embedding encoder $\mathcal{E}$, where $e_i^{(t)} = \mathcal{E}(y_i^{(t)})$. Each agent then receives only its $k$ most semantically distant peers as debate input, rather than the full peer pool. Formally, at debate round $t \geq 1$, for agent $i$ with current response $y_i^{(t)}$, the peer set is defined as:
\begin{equation}
    \mathcal{P}_i^{(t)} = \underset{j \neq i}{\mathrm{top}\text{-}k} \left(1 - \cos\bigl(e_i^{(t)},\, e_j^{(t)}\bigr)\right)
\end{equation}
where $\cos(\cdot,\cdot)$ denotes cosine similarity. By selecting the most semantically distant peers, EPS ensures that the creative stimulus each agent receives is maximally different from its own current direction, amplifying exploration over exploitation.

\section{Experiments}
\label{sec:experiments}

\begin{table*}[t]
\centering
\small
\caption{Set-level diversity results across benchmarks. We report semantic diversity score (Sem., mean $\pm$ std) and lexical diversity score (Lex., mean $\pm$ std, scaled by $\times 100$). Higher is better. \textbf{Bold} indicates the best result(s); multiple entries may be bolded if not statistically significantly different (paired $t$-test, $p < 0.05$).}
\label{tab:diversity}
\resizebox{\textwidth}{!}{%
\begin{tabular}{l cc cc cc cc}
\toprule
\multirow{2}{*}{\textbf{Method}} 
& \multicolumn{2}{c}{\textbf{LiveIdea}} 
& \multicolumn{2}{c}{\textbf{AAE}} 
& \multicolumn{2}{c}{\textbf{MacGyver}} 
& \multicolumn{2}{c}{\textbf{AH v2.0}} \\
\cmidrule(lr){2-3} \cmidrule(lr){4-5} \cmidrule(lr){6-7} \cmidrule(lr){8-9}
& Sem. & Lex. & Sem. & Lex. & Sem. & Lex. & Sem. & Lex. \\
\midrule
\multicolumn{9}{l}{\textit{Qwen3-8B}} \\
\cmidrule(r){1-9}
Direct       & 2.54 $\pm$ 0.05 & 68.96 $\pm$ 1.1 & 1.52 $\pm$ 0.04 & 52.56 $\pm$ 0.9 & 1.52 $\pm$ 0.05 & 55.09 $\pm$ 1.2 & 2.24 $\pm$ 0.04 & 71.98 $\pm$ 0.8 \\
Self-Refine  & 2.59 $\pm$ 0.04 & 75.70 $\pm$ 1.4 & 1.58 $\pm$ 0.05 & 56.86 $\pm$ 1.1 & 1.53 $\pm$ 0.04 & 57.85 $\pm$ 1.3 & 2.25 $\pm$ 0.05 & 72.00 $\pm$ 0.9 \\
Voting       & 2.58 $\pm$ 0.06 & 69.68 $\pm$ 0.9 & 1.54 $\pm$ 0.04 & 53.61 $\pm$ 1.2 & 1.52 $\pm$ 0.05 & 55.75 $\pm$ 0.9 & 2.21 $\pm$ 0.06 & 71.85 $\pm$ 1.4 \\
Homo MAD     & 2.47 $\pm$ 0.04 & 66.62 $\pm$ 1.3 & 1.47 $\pm$ 0.05 & 52.31 $\pm$ 0.8 & 1.48 $\pm$ 0.04 & 55.63 $\pm$ 1.2 & 2.21 $\pm$ 0.04 & 68.51 $\pm$ 1.1 \\
Hetero MAD   & 2.58 $\pm$ 0.05 & 70.87 $\pm$ 1.0 & 1.67 $\pm$ 0.06 & 58.99 $\pm$ 1.5 & 1.53 $\pm$ 0.05 & 57.19 $\pm$ 0.8 & 2.27 $\pm$ 0.05 & 72.72 $\pm$ 1.2 \\
\rowcolor{blue!10} \textbf{Creative-MAD} & \textbf{3.04} $\pm$ 0.06 & \textbf{83.20} $\pm$ 1.2 & \textbf{2.24} $\pm$ 0.04 & \textbf{69.24} $\pm$ 1.3 & \textbf{1.67} $\pm$ 0.04 & \textbf{66.81} $\pm$ 1.0 & \textbf{2.68} $\pm$ 0.05 & \textbf{82.05} $\pm$ 1.2 \\
\midrule
\multicolumn{9}{l}{\textit{Gemma-3-12B-Instruct}} \\
\cmidrule(r){1-9}
Direct       & 2.69 $\pm$ 0.04 & 69.12 $\pm$ 0.9 & 1.71 $\pm$ 0.05 & 50.07 $\pm$ 1.2 & 1.67 $\pm$ 0.05 & 47.70 $\pm$ 1.4 & 2.64 $\pm$ 0.04 & 64.73 $\pm$ 0.8 \\
Self-Refine  & 2.72 $\pm$ 0.05 & 73.05 $\pm$ 1.4 & 1.77 $\pm$ 0.04 & 53.57 $\pm$ 0.9 & 1.67 $\pm$ 0.06 & 52.12 $\pm$ 1.1 & 2.57 $\pm$ 0.05 & 63.89 $\pm$ 1.3 \\
Voting       & 2.66 $\pm$ 0.04 & 69.32 $\pm$ 1.1 & 1.72 $\pm$ 0.05 & 50.28 $\pm$ 1.3 & 1.66 $\pm$ 0.04 & 48.60 $\pm$ 0.9 & 2.61 $\pm$ 0.04 & 64.86 $\pm$ 1.0 \\
Homo MAD     & 2.44 $\pm$ 0.05 & 63.28 $\pm$ 1.4 & 1.68 $\pm$ 0.04 & 49.46 $\pm$ 1.1 & 1.60 $\pm$ 0.05 & 48.34 $\pm$ 1.2 & 2.58 $\pm$ 0.06 & 63.07 $\pm$ 1.5 \\
Hetero MAD   & 2.70 $\pm$ 0.06 & 70.58 $\pm$ 0.8 & 1.75 $\pm$ 0.06 & 52.46 $\pm$ 1.2 & 1.68 $\pm$ 0.04 & 52.50 $\pm$ 1.3 & 2.65 $\pm$ 0.05 & 64.49 $\pm$ 1.0 \\
\rowcolor{blue!10} \textbf{Creative-MAD} & \textbf{3.51} $\pm$ 0.05 & \textbf{81.97} $\pm$ 1.2 & \textbf{1.92} $\pm$ 0.05 & \textbf{61.05} $\pm$ 1.4 & \textbf{1.82} $\pm$ 0.06 & \textbf{59.73} $\pm$ 1.0 & \textbf{2.98} $\pm$ 0.04 & \textbf{75.12} $\pm$ 1.1 \\
\bottomrule
\end{tabular}%
}
\end{table*}

\subsection{Experimental Setup}

\paragraph{Benchmarks.} We evaluate across four benchmarks spanning diverse dimensions of creative generation: \textit{LiveIdeaBench} (scientific ideation) \cite{ruan2026evaluating}, \textit{Argument Annotated Essays (AAE)} (argumentative writing) \cite{huber-niklaus-2025-clear}, \textit{MacGyver} (creative problem-solving) \cite{tian2024macgyver}, and \textit{Arena Hard v2.0 (AH v2.0)} (general creative writing) \cite{paech2023eq}. Appendix~\ref{appendix:dataset_details} provides dataset details.

\paragraph{Baselines.} We compare against the following methods: (1) \textit{Direct}, a single-agent baseline; (2) \textit{Self-Refine}~\cite{madaan2023self}, which iteratively refines a response using self-feedback; (3) \textit{Voting}, which aggregates initial responses from $N$ agents without debate; (4) \textit{Homogeneous MAD (Homo MAD)}~\cite{du2024improving}, the standard decentralized multi-agent debate; (5) \textit{Heterogeneous MAD (Hetero MAD)}~\cite{choi2026debate}, which assigns distinct domain personas to agents before debate; and (6) \textit{Creative-MAD}\footnote{Code available at \url{https://github.com/DA2I2-SLM/Creative-MAD}.} (ours), which combines CLA and EPS to preserve intra-session diversity. All methods use $N = 5$, $R = 2$ rounds, and temperature $= 1.0$. Creative-MAD additionally uses $k=2$ for EPS. Since creative tasks produce open-ended outputs that cannot be compared by exact match or majority voting, all voting and debate-based methods employ a judge instantiated with the same underlying model as the debating agents to select the best candidate at the consensus stage.

\paragraph{Evaluation.} For instance-level quality, we employ an \textit{LLMaaJ} using Qwen3.5-397B-A17B~\cite{qwen3.5}, reporting rubric-based absolute scores and average pairwise win rate computed across all runs per query~\cite{li2025from}. For set-level diversity, we measure semantic diversity via \textit{Vendi Score}~\cite{friedman2023the} computed over \texttt{all-MiniLM-L6-v2} representations~\cite{wang2020minilm} and lexical diversity via \textit{Div-BLEU}~\cite{ruan2025g2}. Each method is independently run 15 times per query. For instance-level quality, we report the mean and standard deviation across all 15 runs. For set-level diversity, we set $G=5$, partition the 15 runs into 3 groups, and report the mean and standard deviation of diversity scores across groups.

\paragraph{Models.} Experiments use two open-source models: Qwen3-8B~\cite{yang2025qwen3} and Gemma-3-12B-Instruct~\cite{gemma3team2025}. 

\subsection{Main Results}

\paragraph{Instance-level Quality Results.}
Table~\ref{tab:quality} presents instance-level quality results across all methods and benchmarks. Creative-MAD maintains quality on par with Homo MAD and Hetero MAD, ranking among the top-performing methods in 5 out of 8 comparisons on both absolute score and win rate across both models. The average score gap to the best-performing method remains within 1.2\% on Qwen3-8B and 0.4\% on Gemma-3-12B-Instruct, and the average win rate gap remains within 2.2\% on Qwen3-8B and 0.9\% on Gemma-3-12B-Instruct, demonstrating that the diversity-preserving mechanisms do not interfere with the quality gains that debate provides. Furthermore, all MAD variants consistently outperform single-agent baselines on both metrics, including Voting, which uses the same number of agents but without debate, confirming that inter-agent communication contributes quality gains beyond simple response aggregation.

\paragraph{Set-level Diversity Results.}
Table~\ref{tab:diversity} presents set-level diversity results across methods and benchmarks. Creative-MAD consistently achieves the highest semantic and lexical diversity, outperforming Homo MAD by 26.2\% (semantic) and 24.0\% (lexical) on Qwen3-8B, and by 23.3\% and 24.0\% on Gemma-3-12B-Instruct. Conversely, Homo MAD yields the lowest semantic diversity, falling below single-agent baselines. While Hetero MAD recovers some diversity through domain-role differentiation, its semantic diversity remains significantly below Creative-MAD by 19.7\% on Qwen3-8B and 16.5\% on Gemma-3-12B-Instruct, which indicates that persona assignment alone fails to counteract the convergence pressure of debate.

Notably, Voting achieves diversity comparable to Direct despite applying the same judge-based consensus mechanism to $N$ independent responses. This confirms that the consensus mechanism does not drive diversity collapse, thereby isolating debate dynamics as the primary driver of inter-session diversity suppression.

\subsection{Human Evaluation}
\label{sec:human_eval}

To validate that our automated evaluation reflects human judgment, we conduct two complementary human studies; full protocols and results are in Appendix~\ref{appendix:human_agreement}.

\paragraph{Judge reliability.} Since our main results rely on Qwen3.5-397B-A17B as judge, we first verify that its quality assessments track human preference. Three annotators pairwise-compare 120 method outputs (Direct/Self-Refine vs.\ Homo MAD) across all four benchmarks. Human annotators reach $79.0\%$ pairwise agreement among themselves, and our LLMaaJ agrees with the human majority vote in $81.0\%$ of cases, on par with human-level agreement.

\paragraph{Creative-MAD vs.\ Homo MAD.} To confirm that Creative-MAD's diversity gains and quality parity also hold under direct human judgment, we run a pilot study comparing Creative-MAD against Homo MAD on 40 queries. For quality, human judgments are split closely between the two methods, with ties as the most common outcome, consistent with our claim that Creative-MAD preserves output quality. For diversity, human annotators judge Creative-MAD's output set as more diverse in $35/40$ queries, matching the Vendi Score's preference in $88\%$ of cases. Together, these studies support that both our quality and diversity findings align with human judgment.

\section{Model Analysis}
\label{sec:analysis}

\begin{figure}[t]
    \centering
    \includegraphics[width=\columnwidth]{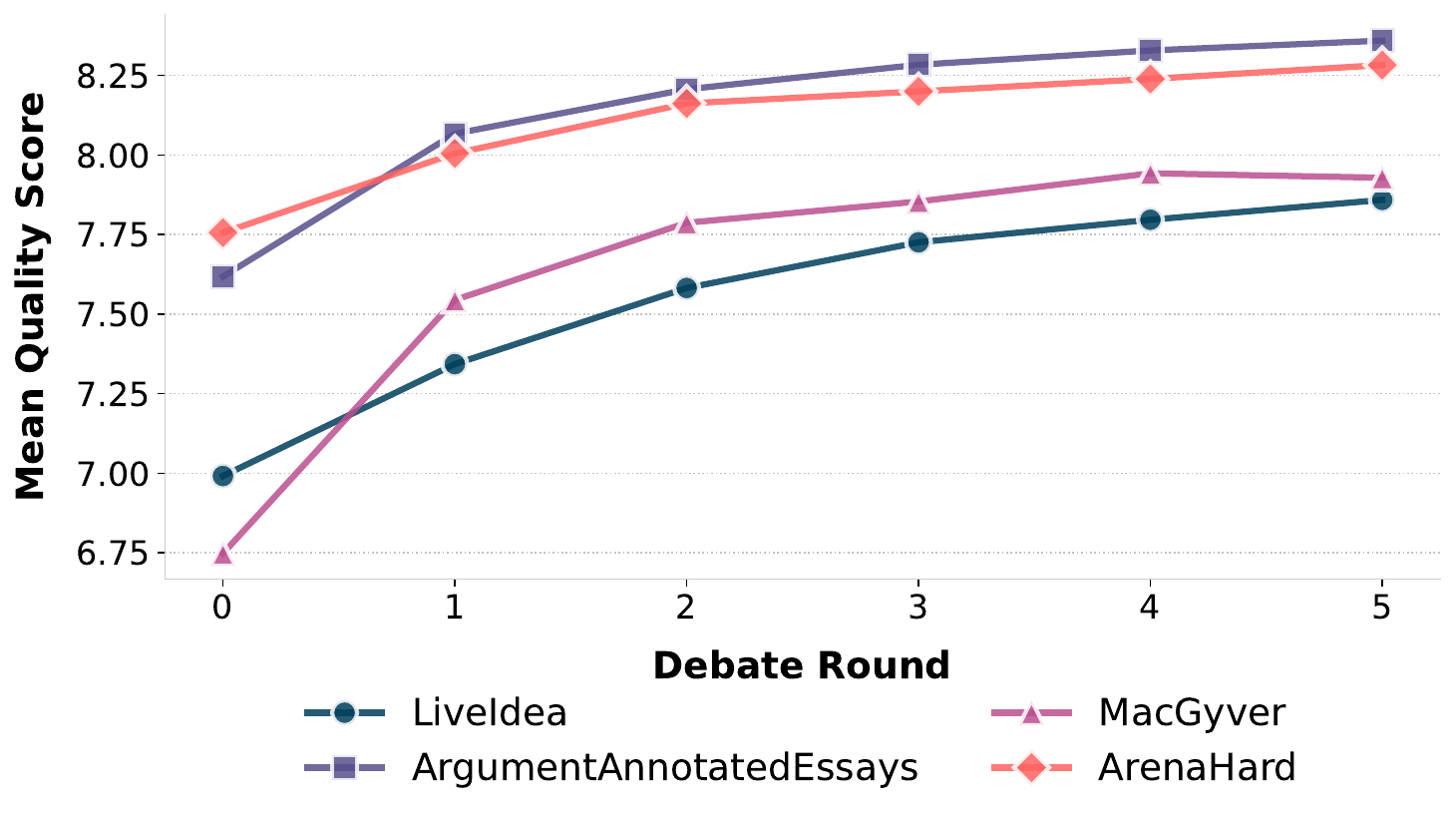}
    \caption{Mean quality score of Creative-MAD across 
    debate rounds on Qwen3-8B.}
    
    \label{fig:quality_rounds}
\end{figure}

\subsection{Quality Improvement Across Debate Rounds}
Figure~\ref{fig:quality_rounds} shows that Creative-MAD's quality improves progressively across debate rounds, confirming that debate drives quality gains beyond mere response aggregation. This behavior stands in contrast to the martingale property reported by \citet{choi2026debate} on factual tasks, where agent beliefs remain stationary in expectation across debate rounds. We attribute this discrepancy to a fundamental difference in task structure: unlike factual tasks, creative tasks involve open-ended response spaces and continuous quality signals, violating the categorical and Bayesian conjugacy assumptions that the martingale result relies on (see Appendix~\ref{appendix:martingale} for details).

\begin{figure}[t]
    \centering
    \includegraphics[width=\columnwidth]{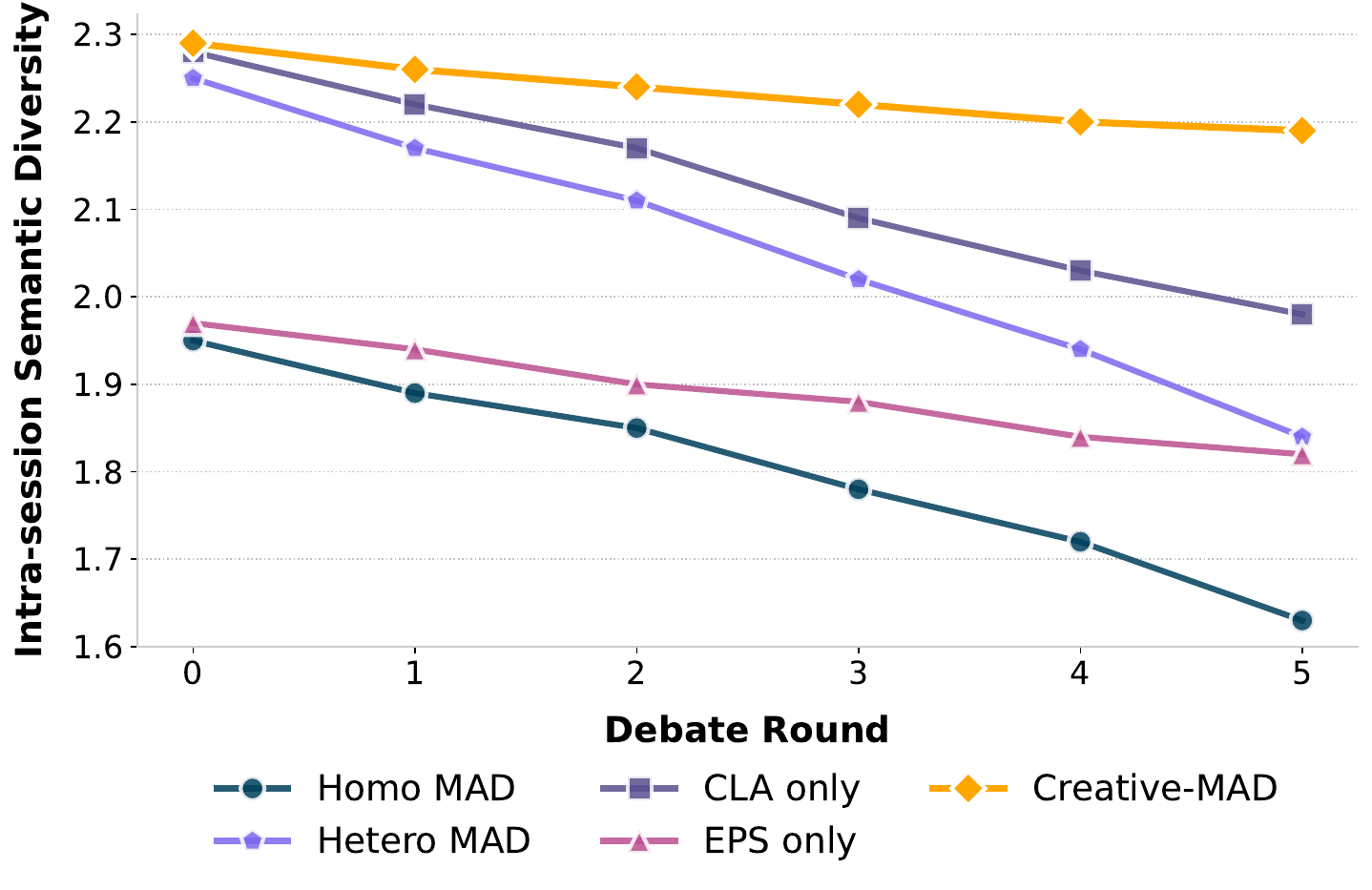}
    \caption{Intra-session semantic diversity across debate rounds for each method variant on Qwen3-8B, averaged across four benchmarks.}
    
    \label{fig:intra_comparison}
\end{figure}

\subsection{Intra-session Diversity}

To analyze the contribution of each individual component, we evaluate two variants: CLA only, which applies CLA without EPS, and EPS only, which applies EPS without CLA.

Figure~\ref{fig:intra_comparison} illustrates intra-session semantic diversity across debate rounds for each method variant on Qwen3-8B. At $t=0$, Creative-MAD, CLA only, and Hetero MAD begin at comparable diversity levels, while EPS only starts lower as it does not affect agent initialization. Across subsequent rounds, Hetero MAD decays at a rate comparable to Homo MAD despite its higher starting point, confirming that domain personas fail to sustain cognitive distinctiveness under multi-turn peer pressure. CLA only decays more slowly, while EPS only, despite starting lower, exhibits the slowest decay rate, confirming that filtering peer signals is more effective at counteracting majority pull. Creative-MAD combines both effects, achieving the strongest overall diversity preservation, providing a mechanistic explanation for the inter-session diversity gains in Table~\ref{tab:diversity}.

\subsection{Inter-session Diversity}

\begin{figure}[t]
    \centering
    \includegraphics[width=\columnwidth]{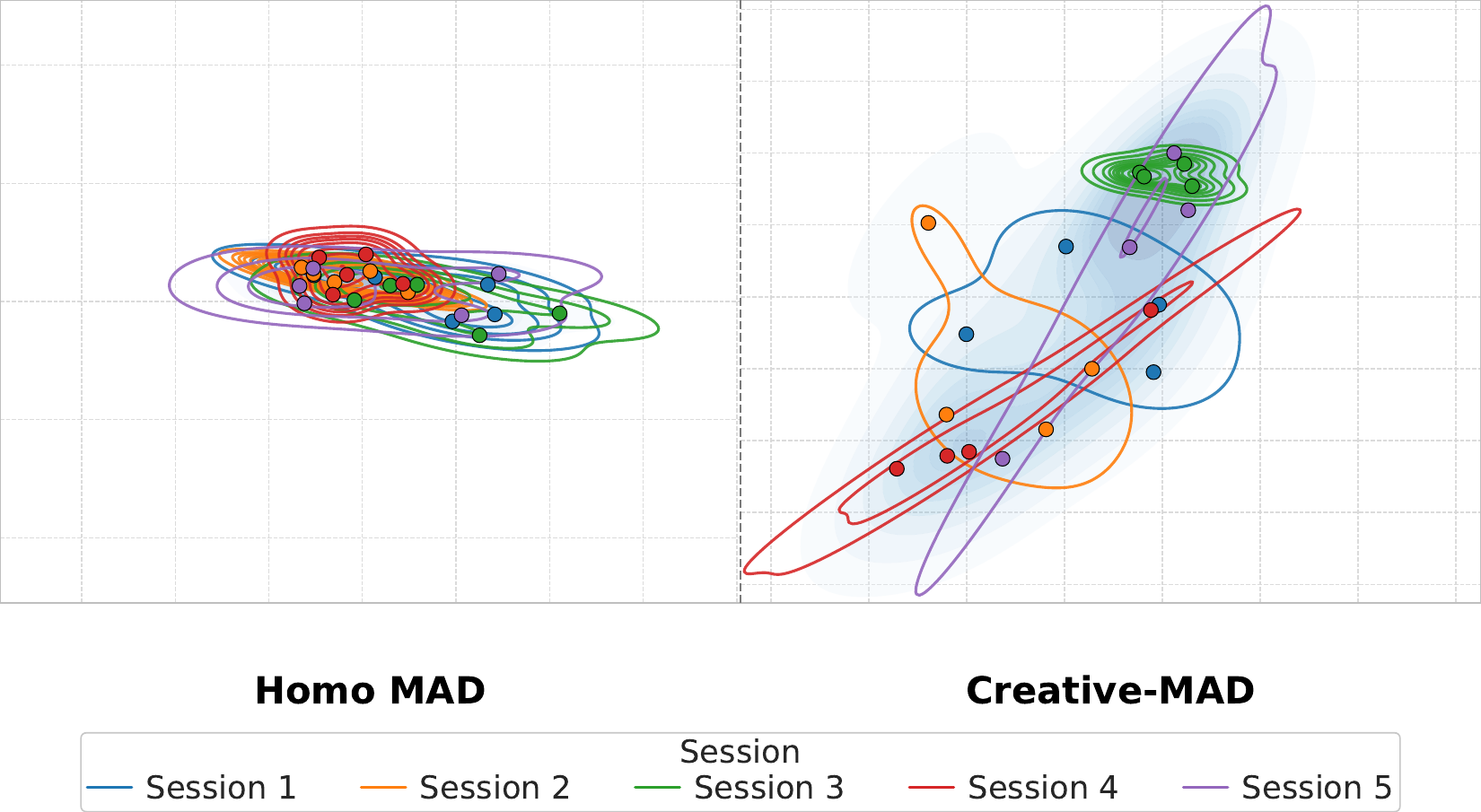}
    \caption{Inter-session diversity topography of Homo MAD vs. Creative-MAD (5 sessions, PCA projection). Colors denote individual sessions; contours indicate kernel density estimates.}
    
    \label{fig:inter_topography}
\end{figure}

Figure~\ref{fig:inter_topography} visualizes the semantic space of final outputs across five independent sessions via PCA projection, providing geometric evidence for Proposition~\ref{prop:intra_inter}. Under Homo MAD, outputs from all sessions collapse into a single dense, heavily overlapping region, confirming that convergence-driven debate forces independent sessions to occupy the same narrow semantic space. Conversely, Creative-MAD exhibits a markedly different topography: outputs within each session are highly distributed, with distinct sessions occupying separated regions of the semantic space. This empirically validates that preserving intra-session diversity directly unlocks higher inter-session diversity.

\subsection{Ablation Study}

\begin{figure}[t]
    \centering
    \includegraphics[width=\columnwidth]{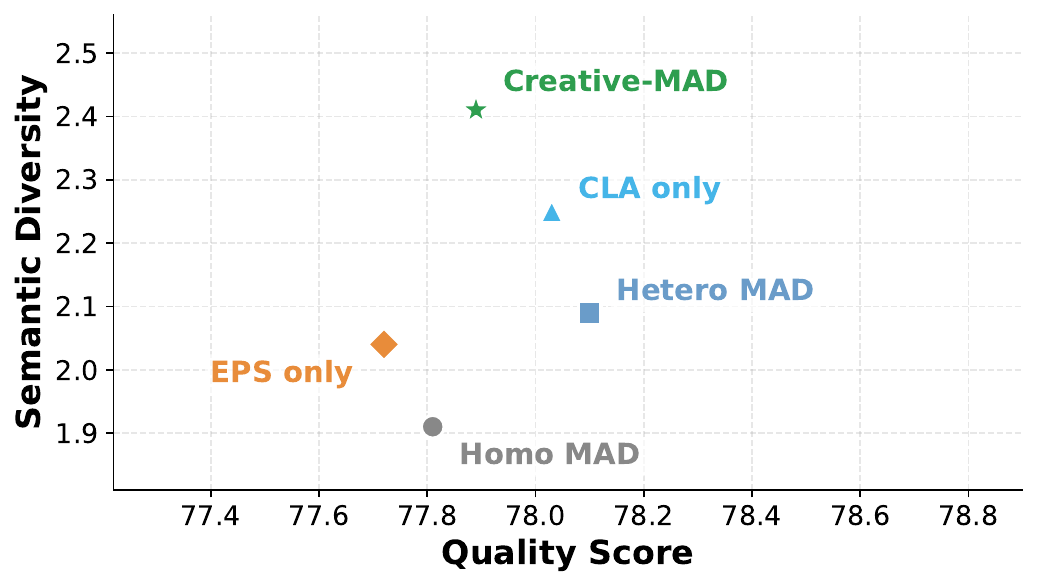}
    \caption{Semantic diversity versus quality score for Creative-MAD ablation variants and baselines on Qwen3-8B, averaged across four benchmarks.}
    
    \label{fig:component_ablation}
\end{figure}

Figure~\ref{fig:component_ablation} compares semantic diversity against quality for Creative-MAD ablation variants and baselines using Qwen3-8B, averaged across four benchmarks. All methods maintain comparable quality scores, confirming that diversity gains do not compromise output quality. CLA only yields a substantial diversity improvement over Homo MAD and consistently outperforms Hetero MAD. This demonstrates that anchoring agents to distinct cognitive modes is more effective at preserving diversity during debate than standard domain persona assignment. EPS only similarly improves diversity, confirming that filtering peer signals reduces majority pull even without a stable cognitive identity. Combining both components yields the largest diversity gains, demonstrating a clear synergy: CLA injects distinct perspectives at initialization, while EPS slows within-session convergence via selective peer exposure. Together, they effectively preserve the intra-session diversity required for inter-session gains.

\subsection{Sensitivity to $k$}

\begin{figure}[t]
    \centering
    \includegraphics[width=\columnwidth]{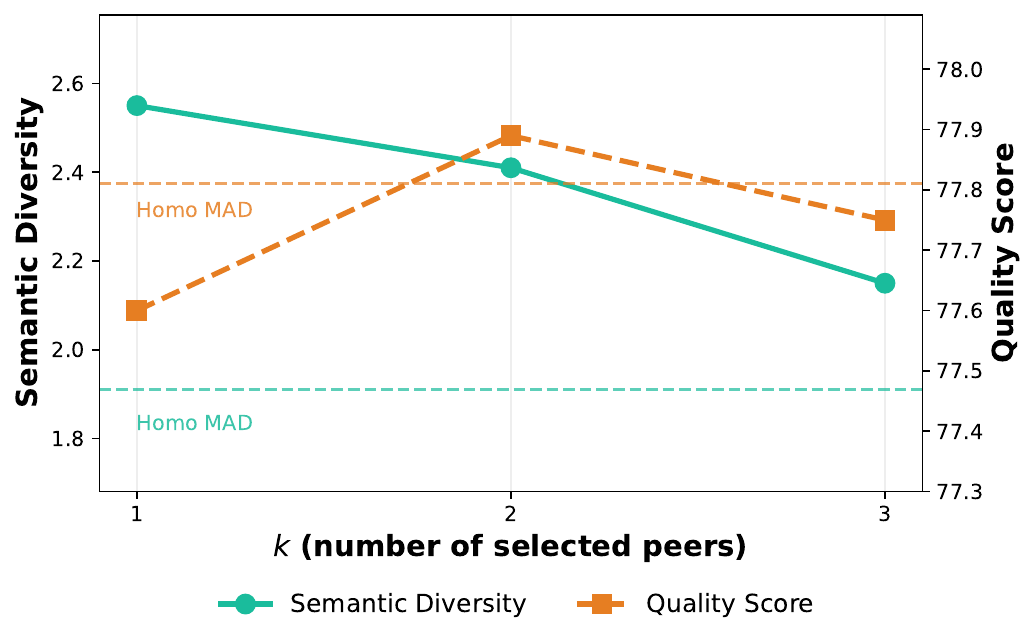}
    \caption{Sensitivity of Creative-MAD to the peer selection hyperparameter $k$ on Qwen3-8B.}
    
    \label{fig:sensitivity_k}
\end{figure}

Figure~\ref{fig:sensitivity_k} illustrates how Creative-MAD performs across different values of the peer selection hyperparameter $k$. Semantic diversity decreases monotonically as $k$ increases, which is expected: larger $k$ exposes each agent to more peers, gradually approaching the full peer pool of standard MAD and reintroducing majority pull. Conversely, quality score peaks at $k=2$ and drops slightly at $k=3$, while dropping noticeably at $k=1$, suggesting that a single peer provides insufficient context for meaningful debate. Taken together, $k=2$ achieves the best balance between diversity preservation and quality, and we adopt this value in all Creative-MAD experiments. Notably, even at $k=3$, Creative-MAD maintains higher diversity than Homo MAD, demonstrating that the method is robust to the choice of $k$ within a reasonable range.

\subsection{Peer Selection Strategy in EPS}

\begin{table}[h]
\centering
\caption{Effect of peer selection strategy on Qwen3-8B. Absolute quality and semantic diversity scores are reported as mean $\pm$ std.}
\small
\resizebox{\columnwidth}{!}{%
\begin{tabular}{l cccc}
\toprule
& \multicolumn{2}{c}{\textbf{LiveIdea}} 
& \multicolumn{2}{c}{\textbf{MacGyver}} \\
\cmidrule(lr){2-3} \cmidrule(lr){4-5}
\textbf{Method} & Quality & Diversity & Quality & Diversity \\
\midrule
Closest-$k$  & 72.8$\pm$0.2 & 2.75$\pm$0.04 & 76.8$\pm$0.2 & 1.52$\pm$0.05 \\
Random-$k$   & 72.1$\pm$0.3 & 2.88$\pm$0.06 & 77.0$\pm$0.4 & 1.59$\pm$0.05 \\
EPS          & 72.6$\pm$0.1 & 3.04$\pm$0.06 & 76.4$\pm$0.3 & 1.67$\pm$0.04 \\
\bottomrule
\end{tabular}%
}
\label{tab:eps_selection}
\end{table}

To isolate the contribution of semantic distance in EPS, we compare three peer selection strategies under the same Creative-MAD framework with $k=2$ on LiveIdeaBench and MacGyver: EPS selects the $k$ most semantically distant peers, Random-$k$ selects $k$ peers uniformly at random, and Closest-$k$ selects the $k$ most semantically similar peers. Table~\ref{tab:eps_selection} reveals a consistent hierarchy in semantic diversity across both benchmarks: $\text{Closest-}k$ yields the lowest diversity, $\text{Random-}k$ moderately improves it, and EPS achieves the highest, validating that maximizing semantic distance drives substantial diversity gains beyond mere pool reduction. Meanwhile, quality scores remain statistically invariant across all strategies, confirming that driving cross-perspective divergence does not compromise response quality.

\subsection{Additional Results}
Due to space constraints, we report the following results in the Appendix. Appendix~\ref{appendix:generalizability} validates generalizability across broader model scales and families, while Appendix~\ref{appendix:reliability} confirms evaluation reliability through inter-judge agreement, detailed human annotation studies, and robustness to embedding model choice. Appendix~\ref{appendix:hyperparameter_sensitivity} examines sensitivity to the number of debate rounds and the number of agents, and Appendix~\ref{appendix:multimodel} extends our analysis to model-heterogeneous settings. Appendix~\ref{appendix:protocol_robustness} evaluates robustness under an alternative debate protocol, and Appendix~\ref{appendix:cost_latency} quantifies Creative-MAD's computational overhead relative to standard MAD. Finally, Appendix~\ref{appendix:failure_cases} analyzes qualitative failure cases where Creative-MAD's diversity gains come at the cost of relevance or constraint satisfaction, and Appendix~\ref{appendix:qualitative} presents representative successful outputs across all four benchmarks.

\section{Conclusion}

In this work, we formalize the quality-diversity trade-off when applying MAD to creative tasks, proving that preserving intra-session diversity is a necessary condition for sustaining inter-session diversity. To address this, we propose Creative-MAD, which combines CLA to counter identity drift and EPS to mitigate majority pull. Extensive experiments confirm that Creative-MAD substantially improves cross-run diversity while maintaining output quality comparable to standard MAD.

\section*{Limitations}
Our work has several limitations that point to promising directions for future research. First, while our judge reliability analysis demonstrates agreement with human raters and commercial models in Appendix~\ref{appendix:reliability}, LLM-based evaluation remains an imperfect proxy for human aesthetic judgment in creative tasks, as automated judges may favor conventionally well-formed outputs over genuinely creative ones, potentially undervaluing subtle dimensions of creative quality. Second, since creative tasks admit no ground-truth answer, we employ a judge-based consensus mechanism to select the final output. While our theoretical analysis motivates CLA and EPS as inference-time interventions targeting identity drift and majority pull, the same analysis points to other directions worth exploring: training-time interventions, such as fine-tuning agents with a reward that encourages sustained divergence across debate rounds, could reduce reliance on a fixed cognitive lens; alternatively, redesigning the consensus stage itself to retain a diverse subset of candidates, rather than collapsing each session into a single selected output, could preserve more of the diversity that debate otherwise loses at consensus. Whether such alternative consensus strategies interact differently with diversity-preserving debate mechanisms remains an open question for future work. Finally, our analysis focuses on the fixed-prompt setting where the query $x$ is held constant across sessions to isolate diversity suppression arising from debate dynamics themselves; run-level interventions such as per-run persona resampling~\cite{hu2025debate} represent a complementary direction, and combining such approaches with Creative-MAD remains a promising avenue for future work.

\section*{Ethical Considerations}
The datasets used are publicly available benchmarks and do not contain personally identifiable information or sensitive content. The human annotation study (Appendix~\ref{appendix:human_agreement}) involved internal researchers who participated voluntarily, and no sensitive data was collected from participants. The proposed Creative-MAD framework generates open-ended creative text and does not target any specific demographic group or enable harmful applications. Automated evaluation via LLMaaJ may reflect biases of the underlying judge model; we mitigate this concern through inter-judge agreement analysis and human evaluation (Appendix~\ref{appendix:reliability}).

\section*{Acknowledgements}
AI assistants were used for language editing (e.g., grammar and phrasing checks) and for coding support during the preparation of this work.

\bibliography{custom}

\appendix
\clearpage
\section{Experimental Details}

\subsection{Dataset Details}
\label{appendix:dataset_details}

In this section, we provide a comprehensive description of the datasets used in our creative generation experiments and the specific number of samples utilized for each.

\textbf{LiveIdeaBench} \cite{ruan2026evaluating} is a specialized benchmark designed to evaluate the divergent thinking capabilities of LLMs in the context of scientific innovation. Instead of providing rich contextual information, this dataset utilizes minimal single-keyword prompts across 22 scientific domains to assess the model's ability to generate novel and feasible scientific hypotheses from its internal knowledge representations. We randomly subsample 300 keywords from the original 1,180 high-impact scientific keywords for our study.

\textbf{Argumentative Essays} (adapted from CLEAR \cite{huber-niklaus-2025-clear}) utilizes topics derived from the \textit{Argument Annotated Essays 2.0} corpus. While the original CLEAR framework is designed for the task of Argument Improvement (rewriting student essays), we specifically extract the controversial essay topics to evaluate the models' capacity for original argumentative generation. This task requires the model to construct structured, persuasive, and coherent arguments from scratch based on a given prompt. We utilize a random subset of 300 essay topics for this evaluation.

\textbf{MacGyver} \cite{tian2024macgyver} challenges models with creative physical problem-solving in real-world scenarios. The dataset is specifically curated to trigger out-of-the-box thinking by presenting problems that necessitate the unconventional usage of everyday objects, thereby pushing against the cognitive bias of ``functional fixedness.'' Models must devise physically feasible and efficient plans to achieve complex goals under strict constraints. For our experiments, we specifically filter for solvable samples—defined as problems for which a safe and reasonable solution can be achieved using the provided tools—and employ a random subset of 300 problem scenarios from this portion of the dataset.

\textbf{Arena Hard v2.0} \cite{li2025from} (creative writing subset) is a benchmark for open-ended creative text generation, featuring diverse prompts that require models to produce poems, lyrics, dialogues, and other imaginative responses under varied stylistic constraints. Unlike benchmarks centered on scientific ideation or physical problem-solving, this subset focuses on general creative writing and is well-suited for evaluating creativity, prompt relevance, and language quality in the absence of a single reference answer. For our experiments, we use the full creative writing subset of Arena Hard v2.0, consisting of 250 prompts.

\subsection{Hyperparameters \& Compute Resources}
\label{appendix:hyperparameters}

\paragraph{Hyperparameters.}
All MAD-based methods use $N = 5$ agents, $R = 2$ debate rounds, temperature $= 1.0$, and maximum output length of 2048 tokens. Creative-MAD uses $k = 2$ for EPS. The judge runs at temperature $= 0.2$.

\paragraph{Compute Resources.}
All experiments are conducted on NVIDIA H100 80GB GPUs using vLLM engine \cite{kwon2023efficient} with bf16 precision.

\section{Proof of Proposition~\ref{prop:intra_inter}}
\label{appendix:proof}
We consider the standard MAD setting in which the $G$ sessions are run independently with no shared state or communication, the input prompt is fixed across runs, and temperature sampling is the only source of stochasticity in the single-model setting. The proof operates solely on the output similarity kernel $K$, making no assumptions about whether outputs are generated by the same or different model families, and therefore extends to model-heterogeneous settings. Alternative interventions such as prompt perturbation or temperature scheduling are outside the scope of this setting.

When intra-session diversity decreases through debate, agents within each session $g$ produce increasingly similar outputs $\{y_i^{(R)}\}_{i=1}^{N}$, reducing the effective diversity of the pool from which the consensus mechanism selects $\hat{y}_g$. As a result, the final outputs $\{\hat{y}_g\}_{g=1}^{G}$ become semantically closer to one another across sessions, causing the off-diagonal entries of $K$ to increase and pushing $\bar{K}$ toward a uniform matrix:
\begin{equation}
    \bar{K} \to \frac{1}{G}\mathbf{1}\mathbf{1}^\top
    \implies H(K) \to 0
    \implies \mathcal{D}_{\mathrm{inter}} \to 1
\end{equation}
More generally, this relationship is monotonic: as intra-session diversity decreases, $K$ approaches a rank-1 structure with a single dominant eigenvalue $\lambda_1 \approx 1$ and remaining eigenvalues $\lambda_i \approx 0$, minimizing $H(K) = -\sum_i \lambda_i \log \lambda_i$ and consequently $\mathcal{D}_{\mathrm{inter}}$.

\paragraph{Practical Implications.} Proposition~\ref{prop:intra_inter} has two important practical implications.

\noindent\textbf{First, the consensus mechanism cannot compensate for low intra-session diversity.} When agents within session $g$ produce highly similar outputs $\{y_i^{(R)}\}_{i=1}^{N}$, the consensus mechanism, regardless of its selection criterion, must draw its final output $\hat{y}_g$ from a concentrated pool. Since this holds for every session $g$, the final outputs $\{\hat{y}_g\}_{g=1}^{G}$ cluster in similar regions across sessions, yielding low inter-session diversity regardless of the consensus strategy employed.

\noindent\textbf{Second, intra-session diversity is the primary controllable lever for inter-session diversity.} The exploration of session $g$ is entirely determined by its own internal debate dynamics, making diversity preservation during the debate process itself the only effective intervention for improving inter-session diversity, such that each session explores a broader region of $\mathcal{Y}$ and the selected output $\hat{y}_g$ can vary meaningfully across sessions.

\section{Martingale Inapplicability in Creative Settings}
\label{appendix:martingale}
\begin{remark}[Martingale Inapplicability]
The martingale result in \citet{choi2026debate} relies on two conditions: (1) agents 
generate responses from a finite categorical space $y \in \{1,\ldots,K\}$, and (2) belief 
updates follow Bayesian conjugacy over discrete counts under a Dirichlet-Compound-Multinomial 
(DCM) model. Creative tasks violate both conditions. First, the response space is open-ended 
($y \in \mathcal{T}^*$), rendering the marginal probability $P(y=k \mid \alpha)$ undefined. 
Second, quality is assessed via a continuous signal $q_{i,t} = \texttt{Judge}(y_{i,t}) \in 
\mathbb{R}$, and the agent update rule becomes:
\begin{equation}
    y_i^{(t)} = \mathcal{M}\left(x, 
    \mathcal{P}_i^{(t-1)}\right), \quad 
    q_{i,t} = \texttt{Judge}(y_i^{(t)})
\end{equation}
which is no longer a Bayesian count update. Consequently, the martingale condition 
$\mathbb{E}[q_{i,t} \mid q_{i,t-1}] = q_{i,t-1}$ need not hold, allowing debate to provide 
genuine quality improvement in creative settings.
\end{remark}

\section{Pairwise Win Rate Heatmaps}
Figure~\ref{fig:winrate_heatmap} presents the full pairwise win rate heatmaps across all method pairs on Qwen3-8B, where each cell $(i, j)$ denotes the win rate of method $i$ against method $j$.

\begin{figure*}[h]
    \centering
    \includegraphics[width=\linewidth]{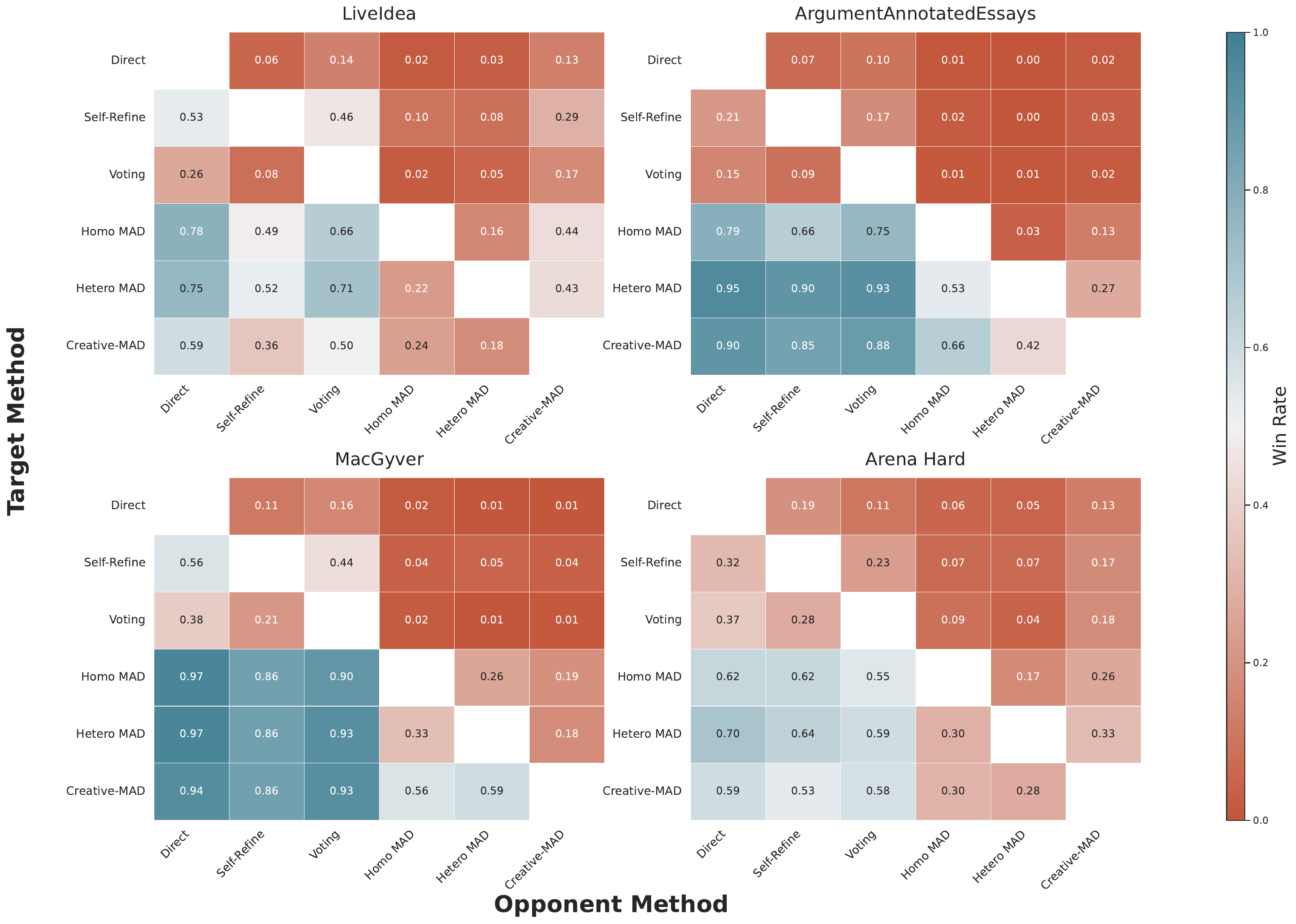}
    \caption{Pairwise win rate heatmaps across four benchmarks on Qwen3-8B. Each cell $(i, j)$ denotes the win rate of the target method (row) against the opponent method (column).}
    \label{fig:winrate_heatmap}
\end{figure*}

\section{Prompt Templates}

\subsection{MAD Templates}

The following template is used at each debate round $t \geq 1$ to prompt each agent to refine its response based on peer outputs.

\begin{tcolorbox}[
    colback=gray!5,
    colframe=gray!50,
    title=Debate Prompt Template,
    fonttitle=\small\bfseries,
    fontlower=\small,
    breakable
]
\texttt{Task: \{query\}}

\texttt{These are the recent ideas from other agents:}\\
\texttt{\{peer\_answers\}}

\texttt{This was your most recent idea:}\\
\texttt{\{own\_answer\}}

Within \texttt{<critique></critique>} tags, briefly evaluate the other ideas, filtering out clichés and synthesizing their best elements into a stronger concept. Then within \texttt{<answer></answer>} tags, write an improved and highly original response built directly upon this synthesis. Your answer must NOT reference other agents, the debate process, or include any meta-commentary.

Respond in the following format:
\texttt{<critique>}\\
\texttt{...}\\
\texttt{</critique>}\\
\texttt{<answer>}\\
\texttt{...}\\
\texttt{</answer>}
\end{tcolorbox}

\subsection{Consensus Selection Templates}

The following template is used at the consensus stage to select the best response from all agents' final outputs.

\begin{tcolorbox}[
    colback=gray!5,
    colframe=gray!50,
    title=Consensus Selection Prompt Template,
    fonttitle=\small\bfseries,
    fontlower=\small,
    breakable
]
\texttt{Task: \{query\}}

\texttt{Several experts have debated and submitted their final refined answers below:}\\
\texttt{\{all\_answers\}}

As an impartial and expert judge, evaluate these answers based on their creativity, logical coherence, and alignment with the task. Select the single best answer.

\textbf{IMPORTANT:} You must output ONLY the Agent Number of the winning answer enclosed in \texttt{<winner>} tags. For example, if Agent 2 is the best, output exactly:\\
\texttt{<winner>2</winner>}

Do NOT output the text of the answer itself, and do NOT provide any reasoning or extra text.
\end{tcolorbox}

\subsection{Cognitive Lens Templates}
\label{appendix:cognitive_lenses}

Table~\ref{tab:cognitive_lenses} presents the five cognitive lens templates assigned to agents in Creative-MAD. Each lens is assigned deterministically to one agent and persists across all debate rounds.

\begin{table*}[t]
\centering
\small
\caption{Cognitive lens templates used in Creative-MAD. Each agent is assigned one lens that persists throughout all debate rounds.}
\label{tab:cognitive_lenses}
\begin{tabular}{p{2cm} p{12cm}}
\toprule
\textbf{Lens} & \textbf{Template} \\
\midrule
Analytical & You process all information through an analytical and logical lens. When approaching any task, you identify underlying structures, patterns, and evidence. You ask: what logic connects the elements, what structure can be revealed, what reasoning makes this idea compelling and coherent? \\
\midrule
Emotional & You process all information through an emotional and human-centered lens. When approaching any task, you focus on human impact, feelings, values, and lived experience. You ask: how does this affect people deeply, what emotions does it evoke, what human truth or vulnerability does it reveal? \\
\midrule
Critical & You process all information through a critical and adversarial lens. When approaching any task, you interrogate assumptions, surface hidden contradictions, and find the perspective that challenges the obvious. You ask: what is being overlooked, what tension or paradox exists here, what would a skeptic or contrarian notice? \\
\midrule
Analogical & You process all information through an analogical and cross-domain lens. When approaching any task, you draw unexpected connections to distant domains, metaphors, and parallel systems. You ask: what surprising analogy illuminates this, what can be borrowed from an unrelated field, what deeper pattern connects seemingly unrelated things? \\
\midrule
Practical & You process all information through a practical and implementation-focused lens. When approaching any task, you evaluate real-world feasibility, concrete mechanisms, and actionable outcomes. You ask: what actually works in practice, how would this be realized step by step, what real-world constraints and opportunities shape the best solution? \\
\bottomrule
\end{tabular}
\end{table*}

\subsection{Persona Prompt Templates}
\label{appendix:persona_prompts}
Table~\ref{tab:persona_prompts} presents the five persona prompt templates used in Hetero MAD. Each persona is assigned deterministically to one agent and persists across all debate rounds.

\begin{table*}[t]
\centering
\small
\caption{Persona prompt templates used in Hetero MAD. Each agent is assigned one persona that persists throughout all debate rounds.}
\label{tab:persona_prompts}
\begin{tabular}{p{2cm} p{12cm}}
\toprule
\textbf{Persona} & \textbf{Template} \\
\midrule
Economist & You are an economist. You are good at economics, finance, and business. You have experience on understanding charts while interpreting the macroeconomic environment prevailing across world economies. \\
\midrule
Psychologist & You are a psychologist. You are good at psychology, sociology, and philosophy. You give people scientific suggestions that will make them feel better. \\
\midrule
Lawyer & You are a lawyer. You are good at law, politics, and history. \\
\midrule
Doctor & You are a doctor and come up with creative treatments for illnesses or diseases. You are able to recommend conventional medicines, herbal remedies and other natural alternatives. You also consider the patient's age, lifestyle and medical history when providing your recommendations. \\
\midrule
Historian & You are a historian. You research and analyze cultural, economic, political, and social events in the past, collect data from primary sources and use it to develop theories about what happened during various periods of history. \\
\bottomrule
\end{tabular}
\end{table*}

\subsection{LLMaaJ Evaluation Rubrics Templates}
\label{appendix:rubrics}

For each benchmark, we employ a dataset-specific rubric to evaluate instance-level quality. All rubrics use a strict 1--10 scale per dimension, and the final quality score is computed as the average across dimensions.

\subsubsection{LiveIdeaBench}
\label{appendix:rubric_liveidea}

The LiveIdeaBench rubric evaluates scientific ideation quality across three dimensions: originality, feasibility, and clarity, adapted from \citet{ruan2026evaluating} with a 1--10 scoring scale.

\begin{tcolorbox}[
    colback=gray!5,
    colframe=gray!50,
    title=LiveIdeaBench Evaluation Rubric,
    fonttitle=\small\bfseries,
    fontlower=\small,
    breakable
]
You are a cynical and world-class expert evaluator of scientific innovation. You are bored by standard AI-generated summaries and are looking for genuinely groundbreaking frontiers.

The user's query provides a KEYWORD. The proposed idea MUST address a meaningful problem related to that keyword with high ``Practical Ingenuity''.

You will assess the response on three key dimensions using a STRICT 1--10 scale:

\begin{enumerate}
    \item \textbf{Originality:} Does it offer a novel, non-obvious contribution? Scores of 7+ are reserved for ideas that go beyond standard textbook knowledge or common AI patterns.
    \item \textbf{Feasibility:} Scientific validity and technical implementation. A brilliant idea that violates physics or logic should be penalized here.
    \item \textbf{Clarity:} Structural elegance and the ability to communicate complex innovation simply.
\end{enumerate}

\textbf{Scoring Guidelines (Be Strict):}
\begin{itemize}
    \item \textbf{1--3 (Poor):} Irrelevant, logically flawed, or purely superficial.
    \item \textbf{4--5 (Baseline):} Technically correct and safe, but ``standard'' and lacking any significant creative spark. This is the default for most AI-generated responses.
    \item \textbf{6--7 (Above Average):} Shows a specific, clever angle or an unconventional but viable application.
    \item \textbf{8--10 (Exceptional):} Truly ingenious, transformative, and non-obvious thinking that demonstrates professional-level scientific insight.
\end{itemize}

If the idea is completely irrelevant to the provided keyword, assign a score of 1--2 across all dimensions.

\textbf{Output format:}
\begin{verbatim}
{
    "analysis": "<brief analysis, specifically explaining 
                  why the idea is or isn't innovative>",
    "scores": {
        "originality": <int>,
        "feasibility": <int>,
        "clarity": <int>
    }
}
\end{verbatim}
\end{tcolorbox}

\subsubsection{Argumentative Essays}
\label{appendix:rubric_aae}

The Argument Annotated Essays corpus is originally designed for argument rewriting evaluation~\citep{huber-niklaus-2025-clear}. Since we repurpose it for argumentative generation from scratch, we design a rubric tailored to original argument quality rather than rewriting criteria, evaluating three dimensions: persuasiveness, novelty, and logical coherence.

\begin{tcolorbox}[
    colback=gray!5,
    colframe=gray!50,
    title=Argumentative Essays Evaluation Rubric,
    fonttitle=\small\bfseries,
    fontlower=\small,
    breakable
]
You are an expert debate judge and rhetorician evaluating argument generation. You will assess the response on three key dimensions:

\begin{enumerate}
    \item \textbf{Persuasiveness:} How compelling, well-supported, and convincing the fundamental argument is.
    \item \textbf{Novelty:} Originality of the angle, avoiding typical clichés or surface-level reasoning.
    \item \textbf{Logical Coherence:} Clarity, structure, and the soundness of the logical deduction.
\end{enumerate}

Provide your brief analysis (less than 100 words) of the argument. Then, inside a ``scores'' object, provide a score from 1 to 10 for each dimension where 1--3 = poorly reasoned, 4--6 = standard argument, 7--10 = highly compelling and insightful.

\textbf{Output format:}
\begin{verbatim}
{
    "analysis": "<your brief analysis here>",
    "scores": {
        "persuasiveness": <score>,
        "novelty": <score>,
        "logical_coherence": <score>
    }
}
\end{verbatim}
\end{tcolorbox}

\subsubsection{MacGyver}
\label{appendix:rubric_macgyver}

MacGyver evaluates creative physical problem-solving under strict 
resource constraints~\citep{tian2024macgyver}. Following the 
evaluation criteria described in the original benchmark, we design 
a rubric assessing three dimensions: out-of-the-box thinking, 
practical feasibility, and constraint utilization.

\begin{tcolorbox}[
    colback=gray!5,
    colframe=gray!50,
    title=MacGyver Evaluation Rubric,
    fonttitle=\small\bfseries,
    fontlower=\small,
    breakable
]
{\small
You are an expert evaluator assessing creative problem-solving 
in extreme constraint scenarios. You will assess the proposed 
solution on three key dimensions:

\begin{enumerate}
    \item \textbf{Out-of-the-box Thinking:} How unconventional 
    and clever the solution is, avoiding standard generic answers.
    \item \textbf{Practical Feasibility:} Whether the physical 
    actions described would actually work and obey the laws of 
    physics.
    \item \textbf{Constraint Utilization:} How effectively the 
    limited available items are strictly used without assuming 
    extra resources.
\end{enumerate}

Provide your brief analysis (less than 100 words) of the solution. 
Then, inside a ``scores'' object, provide a score from 1 to 10 
for each dimension where 1--3 = hallucinatory/boring, 
4--6 = works but basic, 7--10 = highly ingenious and viable.

\textbf{Output format:}
\begin{verbatim}
{"analysis": "<your brief analysis here>",
 "scores": {"out_of_the_box_thinking": <score>,
            "practical_feasibility": <score>,
            "constraint_utilization": <score>}}
\end{verbatim}
}
\end{tcolorbox}

\subsubsection{Arena Hard v2.0}
\label{appendix:rubric_arenahard}

Arena Hard v2.0 (creative writing subset) covers diverse creative writing tasks including poetry, dialogue, and genre writing. Following the evaluation criteria of the benchmark, we design a rubric assessing three dimensions: creativity, relevance, and language quality.

\begin{tcolorbox}[
    colback=gray!5,
    colframe=gray!50,
    title=Arena Hard v2.0 Evaluation Rubric,
    fonttitle=\small\bfseries,
    fontlower=\small,
    breakable
]
{\small
You are an expert judge evaluating creative writing responses. 
You will assess the response on three key dimensions using a 
STRICT 1--10 scale:

\begin{enumerate}
    \item \textbf{Creativity:} The response demonstrates 
    originality, unique ideas, and avoids clichés or generic 
    patterns.
    \item \textbf{Relevance:} The response directly addresses 
    and fulfills the requirements of the prompt.
    \item \textbf{Language Quality:} The writing is fluent, 
    coherent, and stylistically appropriate for the genre 
    (e.g., poetry, rap, dialogue, etc.).
\end{enumerate}

\textbf{Scoring Guidelines (Be Strict):}
\begin{itemize}
    \item \textbf{1--3 (Poor):} Off-topic, generic, or poorly 
    written. Lacks creativity or fails to address the prompt.
    \item \textbf{4--5 (Baseline):} Adequate and relevant, but 
    not particularly creative or polished. May have minor 
    language issues.
    \item \textbf{6--7 (Above Average):} Shows clear creativity, 
    good relevance, and strong language use.
    \item \textbf{8--10 (Exceptional):} Highly original, 
    perfectly fits the prompt, and is masterfully written.
\end{itemize}

If the response is completely irrelevant or empty, assign a 
score of 1--2 across all dimensions.

\textbf{Output format:}
\begin{verbatim}
{"analysis": "<brief analysis of strengths and weaknesses>",
 "scores": {"creativity": <int>,
            "relevance": <int>,
            "language_quality": <int>}}
\end{verbatim}
}
\end{tcolorbox}

\subsection{Pairwise Win Rate Evaluation Templates}
\label{appendix:pairwise}

For pairwise win rate evaluation, we employ the same 
Qwen3.5-397B-A17B judge with dataset-specific criteria 
as defined in the rubrics above (Appendix~\ref{appendix:rubrics}). 
For each query, we sample one response per method and perform 
pairwise comparisons across all $\binom{K}{2}$ method pairs, 
where $K$ is the number of methods. To mitigate positional bias, 
each pair is evaluated twice with response positions swapped. 
A winner is declared only when both runs agree; disagreements 
are counted as ties. The win rate of each method is computed as 
the fraction of matches won across all pairwise comparisons and 
all queries.

\begin{tcolorbox}[
    colback=gray!5,
    colframe=gray!50,
    title=Pairwise Win Rate Judge Prompt,
    fonttitle=\small\bfseries,
    fontlower=\small,
    breakable
]
{\small
You are an expert evaluator assessing AI-generated responses 
based on ``Creative Quality''.

Determine which response is better based on how well it 
satisfies the following criteria specific to the task domain:

\texttt{\{criteria\}}

You will be given a task and two AI responses (Response A 
and Response B). Determine which response demonstrates 
superior quality according to the criteria above.

You MUST output a valid JSON object with exactly two keys:
\begin{enumerate}
    \item \textbf{reasoning:} A brief comparison (2--3 sentences) 
    explaining the key differences. Before choosing the winner, 
    explicitly list at least one specific point where the winner 
    outperformed the other.
    \item \textbf{winner:} One of ``A'', ``B'', or ``tie''.
\end{enumerate}

A ``tie'' is STRONGLY DISCOURAGED and is only acceptable if 
and only if both responses are perfectly identical in value. 
Even a minor improvement or a better execution means that 
model must win.

\textbf{Output format:}
\begin{verbatim}
{"reasoning": "<brief comparison>",
 "winner": "A" | "B" | "tie"}
\end{verbatim}
}
\end{tcolorbox}

\section{Model Generalizability}
\label{appendix:generalizability}

\begin{table*}[t]
\centering
\small
\caption{Generalizability results across four creative benchmarks on additional model scales. We report instance-level quality score (mean $\pm$ std) and set-level semantic diversity score (mean $\pm$ std). Higher is better. \textbf{Bold} indicates the best result(s); multiple entries may be bolded if not statistically significantly different (paired $t$-test, $p < 0.05$).}
\label{tab:generalizability}
\resizebox{\textwidth}{!}{%
\begin{tabular}{l cc cc cc cc}
\toprule
\multirow{2}{*}{\textbf{Method}} 
& \multicolumn{2}{c}{\textbf{LiveIdea}} 
& \multicolumn{2}{c}{\textbf{AAE}} 
& \multicolumn{2}{c}{\textbf{MacGyver}} 
& \multicolumn{2}{c}{\textbf{AH v2.0}} \\
\cmidrule(lr){2-3} \cmidrule(lr){4-5} \cmidrule(lr){6-7} \cmidrule(lr){8-9}
& Quality & Diversity
& Quality & Diversity
& Quality & Diversity
& Quality & Diversity \\
\midrule
\multicolumn{9}{l}{\textit{Llama-3.2-3B-Instruct}} \\
\cmidrule(r){1-9}
Direct   & 61.4$\pm$0.3 & 3.19$\pm$0.04 & 70.0$\pm$0.4 & 1.68$\pm$0.04 & 48.6$\pm$0.4 & 1.79$\pm$0.06 & 47.2$\pm$0.4 & 2.61$\pm$0.05 \\
Self-Refine  & \textbf{64.5}$\pm$0.2 & 3.13$\pm$0.05 & \textbf{74.9}$\pm$0.6 & 1.85$\pm$0.04 & 49.4$\pm$0.7 & 1.78$\pm$0.04 & 54.1$\pm$0.8 & 2.62$\pm$0.06 \\
Voting       & 61.4$\pm$0.5 & 3.13$\pm$0.04 & 70.7$\pm$0.2 & 1.69$\pm$0.05 & 50.4$\pm$1.7 & 1.80$\pm$0.05 & 48.6$\pm$0.8 & 2.58$\pm$0.04 \\
Homo MAD     & 64.1$\pm$0.3 & 2.72$\pm$0.06 & \textbf{74.4}$\pm$0.3 & 1.51$\pm$0.06 & \textbf{52.9}$\pm$0.7 & 1.72$\pm$0.04 & \textbf{70.7}$\pm$3.9 & 2.44$\pm$0.06 \\
Hetero MAD   & \textbf{64.7}$\pm$0.3 & 3.22$\pm$0.05 & \textbf{74.1}$\pm$0.3 & 1.72$\pm$0.06 & \textbf{53.2}$\pm$0.6 & 1.80$\pm$0.05 & \textbf{69.1}$\pm$2.1 & 2.63$\pm$0.04 \\
Creative-MAD & \textbf{64.6}$\pm$0.2 & \textbf{3.71}$\pm$0.06 & \textbf{74.1}$\pm$0.2 & \textbf{2.23}$\pm$0.05 & \textbf{52.8}$\pm$0.6 & \textbf{2.20}$\pm$0.07 & \textbf{70.0}$\pm$2.5 & \textbf{3.08}$\pm$0.04 \\
\midrule
\multicolumn{9}{l}{\textit{Qwen3-32B}} \\
\cmidrule(r){1-9}
Direct   & 70.2$\pm$0.3 & 2.73$\pm$0.05 & 73.4$\pm$0.1 & 1.57$\pm$0.05 & 76.3$\pm$0.4 & 1.62$\pm$0.04 & 77.9$\pm$0.1 & 2.13$\pm$0.06 \\
Self-Refine  & 75.3$\pm$0.3 & 2.78$\pm$0.06 & 76.8$\pm$0.1 & 1.67$\pm$0.04 & 79.1$\pm$0.3 & 1.58$\pm$0.06 & 79.3$\pm$0.2 & 2.15$\pm$0.04 \\
Voting       & 71.5$\pm$0.3 & 2.79$\pm$0.04 & 74.3$\pm$0.1 & 1.57$\pm$0.06 & 78.5$\pm$0.2 & 1.62$\pm$0.05 & 82.2$\pm$0.2 & 2.19$\pm$0.04 \\
Homo MAD     & \textbf{77.2}$\pm$0.3 & 2.69$\pm$0.05 & 77.2$\pm$0.2 & 1.56$\pm$0.05 & \textbf{81.2}$\pm$0.3 & 1.59$\pm$0.06 & \textbf{83.9}$\pm$0.3 & 1.99$\pm$0.05 \\
Hetero MAD   & \textbf{76.8}$\pm$0.3 & 2.81$\pm$0.04 & \textbf{78.0}$\pm$0.2 & 1.62$\pm$0.04 & \textbf{81.2}$\pm$0.3 & 1.64$\pm$0.05 & \textbf{83.0}$\pm$0.3 & 2.19$\pm$0.06 \\
Creative-MAD & \textbf{77.0}$\pm$0.3 & \textbf{3.26}$\pm$0.06 & \textbf{78.0}$\pm$0.2 & \textbf{2.10}$\pm$0.07 & 80.0$\pm$0.3 & \textbf{1.91}$\pm$0.04 & \textbf{83.5}$\pm$0.3 & \textbf{2.62}$\pm$0.05 \\
\bottomrule
\end{tabular}%
}
\end{table*}

To verify that our findings generalize beyond the primary models evaluated in the main experiments, we conduct additional evaluations on two models spanning a wider range of parameter scales: Llama-3.2-3B-Instruct \cite{grattafiori2024llama} (3B) and Qwen3-32B \cite{yang2025qwen3} (32B). Together with the primary models (Qwen3-8B and Gemma-3-12B-Instruct), this covers four scales across three model families (Qwen, Llama, Gemma).

Table~\ref{tab:generalizability} reports quality and semantic diversity results across all four benchmarks. The quality-diversity trade-off observed in the main experiments holds consistently across both model scales: Homo MAD and Hetero MAD achieve higher quality than almost non-debate baselines while exhibiting lower diversity, and Creative-MAD maintains quality comparable to standard MAD variants while substantially improving semantic diversity across all benchmarks. These results confirm that the trade-off and our proposed solution are not artifacts of a specific model scale or family, but reflect a structural property of multi-agent debate dynamics in creative generation.

\section{Evaluation Reliability}
\label{appendix:reliability}

We validate the reliability of our evaluation along four complementary dimensions: agreement of our LLMaaJ with commercial judge models (Section~\ref{appendix:judge_agreement}), agreement with human annotators (Section~\ref{appendix:human_agreement}), robustness of our semantic diversity metric across different embedding models (Section~\ref{appendix:diversity_metric}), and robustness of quality rankings to length bias (Section~\ref{appendix:length_bias}).

\subsection{LLMaaJ Inter-Model Agreement}
\label{appendix:judge_agreement}

\begin{table*}[t]
\centering
\small
\caption{Inter-judge agreement on rubric-based absolute scores (mean $\pm$ std over $G=5$ outputs per method per query) on a subset of 50 queries per benchmark (Qwen3-8B outputs). Spearman $\rho$ is computed between Qwen3.5-397B-A17B (Qwen) and each commercial judge — GPT-5.4 (GPT) and Claude Sonnet 4.6 (Sonnet) — across methods per benchmark. Higher $\rho$ indicates stronger agreement.}
\label{tab:judge_agreement_score}
\resizebox{\textwidth}{!}{%
\begin{tabular}{l ccc ccc ccc ccc}
\toprule
& \multicolumn{3}{c}{\textbf{LiveIdea}}
& \multicolumn{3}{c}{\textbf{AAE}}
& \multicolumn{3}{c}{\textbf{MacGyver}}
& \multicolumn{3}{c}{\textbf{AH v2.0}} \\
\cmidrule(lr){2-4} \cmidrule(lr){5-7}
\cmidrule(lr){8-10} \cmidrule(lr){11-13}
\textbf{Method}
& \textbf{Qwen} & \textbf{GPT} & \textbf{Sonnet}
& \textbf{Qwen} & \textbf{GPT} & \textbf{Sonnet}
& \textbf{Qwen} & \textbf{GPT} & \textbf{Sonnet}
& \textbf{Qwen} & \textbf{GPT} & \textbf{Sonnet} \\
\midrule
Direct        & $66.5 \pm 0.3$ & $63.0 \pm 1.0$ & $59.8 \pm 0.0$ & $73.5 \pm 0.7$ & $59.6 \pm 0.1$ & $62.6 \pm 0.4$ & $64.6 \pm 1.6$ & $30.9 \pm 0.5$ & $26.9 \pm 0.8$ & $75.3 \pm 1.2$ & $58.7 \pm 0.1$ & $54.7 \pm 0.8$ \\
Self-Refine   & $69.7 \pm 0.5$ & $64.2 \pm 0.7$ & $60.1 \pm 1.0$ & $74.4 \pm 0.4$ & $60.3 \pm 0.2$ & $63.7 \pm 0.6$ & $69.1 \pm 1.2$ & $33.9 \pm 0.3$ & $30.3 \pm 1.7$ & $78.0 \pm 0.9$ & $60.9 \pm 0.1$ & $55.4 \pm 1.2$ \\
Voting        & $67.5 \pm 0.7$ & $63.8 \pm 0.5$ & $59.1 \pm 1.4$ & $74.4 \pm 0.3$ & $60.0 \pm 0.2$ & $63.2 \pm 0.6$ & $67.0 \pm 1.3$ & $32.3 \pm 1.5$ & $26.7 \pm 2.2$ & $78.8 \pm 0.4$ & $60.8 \pm 1.6$ & $53.6 \pm 0.7$ \\
Homo MAD      & $73.7 \pm 1.0$ & $65.0 \pm 0.2$ & $61.2 \pm 0.8$ & $80.6 \pm 0.3$ & $62.4 \pm 0.2$ & $70.3 \pm 1.0$ & $77.1 \pm 0.6$ & $36.2 \pm 0.5$ & $35.2 \pm 0.5$ & $81.4 \pm 0.3$ & $62.1 \pm 0.4$ & $57.6 \pm 2.0$ \\
Hetero MAD    & $72.7 \pm 0.4$ & $64.1 \pm 1.4$ & $61.8 \pm 1.4$ & $82.3 \pm 0.2$ & $63.7 \pm 0.2$ & $72.5 \pm 0.3$ & $78.4 \pm 0.2$ & $36.6 \pm 0.7$ & $36.4 \pm 0.4$ & $82.7 \pm 0.2$ & $62.4 \pm 0.5$ & $58.4 \pm 0.1$ \\
Creative-MAD  & $71.6 \pm 0.7$ & $65.7 \pm 0.3$ & $61.8 \pm 2.2$ & $82.1 \pm 0.2$ & $64.5 \pm 0.7$ & $79.1 \pm 0.7$ & $75.7 \pm 2.1$ & $36.8 \pm 2.1$ & $30.9 \pm 1.7$ & $80.8 \pm 0.5$ & $61.7 \pm 0.8$ & $57.1 \pm 0.8$ \\
\midrule
Spearman $\rho$ & ---          & 0.83         & 0.80         & ---          & 0.94         & 0.94         & ---          & 0.77         & 0.94         & ---          & 0.94         & 0.94 \\
\bottomrule
\end{tabular}%
}
\end{table*}

\begin{table*}[t]
\centering
\small
\caption{Inter-judge agreement on pairwise win rates (\%)  on a subset of 50 queries per benchmark (Qwen3-8B outputs,  1 random output per method per query). Spearman $\rho$ is computed between Qwen3.5-397B-A17B (Qwen) and each commercial judge — GPT-5.4 (GPT) and Claude Sonnet 4.6 (Sonnet) — across methods per benchmark. Higher $\rho$ indicates stronger agreement.}
\label{tab:judge_agreement_winrate}
\resizebox{\textwidth}{!}{%
\begin{tabular}{l ccc ccc ccc ccc}
\toprule
& \multicolumn{3}{c}{\textbf{LiveIdea}}
& \multicolumn{3}{c}{\textbf{AAE}}
& \multicolumn{3}{c}{\textbf{MacGyver}}
& \multicolumn{3}{c}{\textbf{AH v2.0}} \\
\cmidrule(lr){2-4} \cmidrule(lr){5-7}
\cmidrule(lr){8-10} \cmidrule(lr){11-13}
\textbf{Method}
& \textbf{Qwen} & \textbf{GPT} & \textbf{Sonnet}
& \textbf{Qwen} & \textbf{GPT} & \textbf{Sonnet}
& \textbf{Qwen} & \textbf{GPT} & \textbf{Sonnet}
& \textbf{Qwen} & \textbf{GPT} & \textbf{Sonnet} \\
\midrule
Direct        & 8.4  & 10.5 & 12.0 & 7.2  & 12.0 & 9.5  & 7.6  & 12.0 & 17.5 & 12.0 & 23.5 & 20.0 \\
Self-Refine       & 29.5 & 30.5 & 28.5 & 8.8  & 20.0 & 17.5 & 21.6 & 18.5 & 31.0 & 18.5 & 26.0 & 25.5 \\
Voting            & 9.5  & 15.0 & 16.5 & 6.4  & 11.5 & 12.0 & 11.6 & 21.0 & 16.0 & 21.0 & 30.0 & 21.0 \\
Homo MAD   & 45.2 & 52.0 & 66.0 & 59.0 & 43.6 & 57.0 & 70.0 & 62.0 & 54.0 & 54.0 & 42.0 & 34.0 \\
Hetero MAD & 43.2 & 48.2 & 44.5 & 72.5 & 64.8 & 70.0 & 75.5 & 60.0 & 63.0 & 63.0 & 39.0 & 42.5 \\
Creative-MAD & 41.6 & 50.1 & 64.4 & 80.4 & 75.2 & 79.8 & 77.2 & 65.2 & 59.2 & 66.2 & 41.5 & 39.2 \\
\midrule
Spearman $\rho$   & --   & 0.94 & 0.94 & --   & 1.00 & 0.94 & --   & 0.94 & 0.77 & --   & 0.83 & 0.89  \\
\bottomrule
\end{tabular}%
}
\end{table*}

\begin{table*}[t]
\centering
\small
\caption{Vendi Score across five embedding models on LiveIdeaBench (Qwen3-8B). Creative-MAD consistently ranks first and Homo MAD remains at the bottom across almost all embeddings, demonstrating a strong ranking consensus (average Spearman $\rho = 0.93$ vs.\ MiniLM-L6).}
\label{tab:vendi_stability}
\begin{tabular}{lccccc}
\toprule
\textbf{Method} & \textbf{MiniLM-L6} & \textbf{Jina-v3} & \textbf{Qwen3-Emb} & \textbf{BGE-base} & \textbf{Gemma-300M} \\
\midrule
Direct        & 2.5434 & 2.1987 & 2.3298 & 1.6583 & 1.9111 \\
Self-Refine       & 2.5921 & 2.2867 & 2.4408 & 1.6888 & 1.9748 \\
Voting            & 2.5846 & 2.2199 & 2.3362 & 1.6674 & 1.9398 \\
Homo MAD   & 2.4740 & 2.1726 & 2.3557 & 1.6256 & 1.8855 \\
Hetero MAD & 2.6063 & 2.2385 & 2.4498 & 1.6945 & 1.9589 \\
Creative-MAD      & 3.0444 & 2.6515 & 2.7573 & 1.8579 & 2.3647 \\
\midrule
Spearman $\rho$ & --- & 0.94 & 0.83 & 1.00 & 0.94 \\
\bottomrule
\end{tabular}
\end{table*}

\begin{table}[t]
\centering
\small
\caption{Effect of number of debate rounds $R$ on Homo MAD (Qwen3-8B, $N=5$ fixed). Quality denotes rubric-based absolute score (mean $\pm$ std) and Diversity denotes semantic diversity score (mean $\pm$ std). $\dagger$ denotes the default setting used in main experiments.}
\label{tab:sensitivity_rounds}
\begin{tabular}{c cccc}
\toprule
& \multicolumn{2}{c}{\textbf{LiveIdea}} 
& \multicolumn{2}{c}{\textbf{MacGyver}} \\
\cmidrule(lr){2-3} \cmidrule(lr){4-5}
\textbf{R} & Quality & Diversity & Quality & Diversity \\
\midrule
1           & 69.9$\pm$0.1 & 2.54$\pm$0.04 & 67.5$\pm$0.3 & 1.50$\pm$0.03 \\
2$^\dagger$ & 74.0$\pm$0.1 & 2.47$\pm$0.03 & 76.3$\pm$0.3 & 1.48$\pm$0.04 \\
3           & 75.8$\pm$0.2 & 2.43$\pm$0.03 & 78.0$\pm$0.3 & 1.45$\pm$0.05 \\
4           & 77.3$\pm$0.1 & 2.40$\pm$0.04 & 78.5$\pm$0.4 & 1.42$\pm$0.03 \\
5           & 78.0$\pm$0.1 & 2.38$\pm$0.03 & 79.4$\pm$0.3 & 1.40$\pm$0.04 \\
\bottomrule
\end{tabular}
\end{table}

While commercial models such as GPT-5.4 \cite{openai2026gpt54} and Claude Sonnet~4.6 \cite{claude2026sonnet} are widely adopted as judges in recent work, our evaluation involves a large number of inference calls across methods, benchmarks, and independent runs (15 runs per method per query for statistical reliability), making commercial API-based evaluation prohibitively expensive at scale. We therefore employ Qwen3.5-397B-A17B as our primary judge and validate its reliability against these commercial alternatives on a representative subset.

\paragraph{Setup.}
We randomly sample 50 queries per benchmark and evaluate Qwen3-8B outputs from all methods using all three judges. For rubric-based score validation (Table~\ref{tab:judge_agreement_score}), we compute the mean rubric score across $G=5$ randomly sampled outputs per method per query. For pairwise win rate validation (Table~\ref{tab:judge_agreement_winrate}), we use one randomly selected output per method per query. For each metric, we compute Spearman $\rho$ between Qwen3.5-397B-A17B and each commercial judge across methods per benchmark, measuring the consistency of method rankings induced by each judge.

\paragraph{Results.}
Tables~\ref{tab:judge_agreement_score} and~\ref{tab:judge_agreement_winrate} show generally strong agreement between Qwen3.5-397B-A17B and both commercial judges across most benchmarks. Spearman $\rho$ for rubric scores averages 0.87 and 0.91 against GPT-5.4 and Claude Sonnet~4.6 respectively, and 0.93 and 0.89 for pairwise win rates, suggesting that method rankings produced by our judge are largely consistent with those of commercial alternatives. These results support the use of Qwen3.5-397B-A17B as a reliable and cost-effective alternative for evaluating creative generation quality at scale.

\subsection{Human Agreement Study}
\label{appendix:human_agreement}

\begin{figure*}[h]
    \centering
    \includegraphics[width=\linewidth]{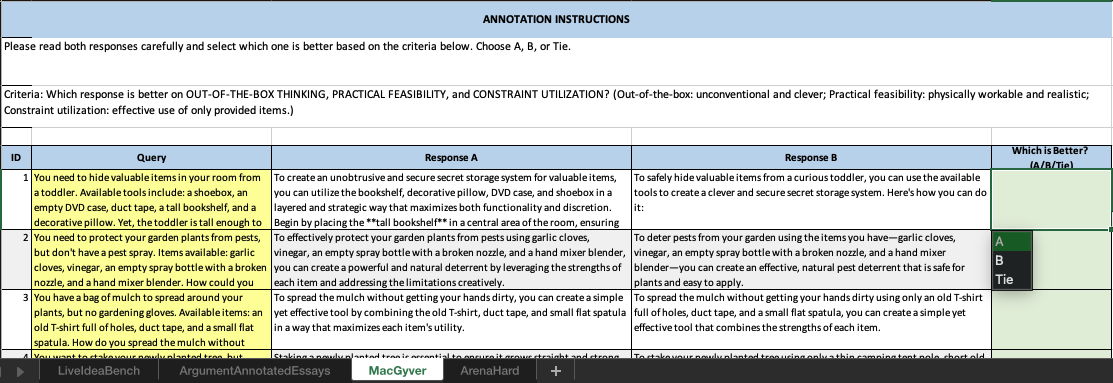}
    \caption{Annotation interface for the human agreement study. Each sheet provides dataset-specific evaluation criteria in the header, followed by query-response pairs for pairwise comparison. The Query column is highlighted in yellow and the fillable judgment column is highlighted in green.}
    \label{fig:annotation_interface}
\end{figure*}

To validate the alignment between our automated judge and human judgment, we conduct two human evaluation studies across all four benchmarks: (1) a judge-reliability study comparing single-agent baselines against Homo MAD, and (2) a pilot study directly comparing Homo MAD against Creative-MAD on both quality and diversity. Summary results are reported in Section~\ref{sec:human_eval}.

\paragraph{Setup (Judge Reliability).}
We randomly sample 30 queries per benchmark (120 total, fixed random seed). For each query, we randomly assign one of two comparison pairs contrasting single-agent baselines against Homo MAD: Direct vs.\ Homo MAD (larger quality gap) or Self-Refine vs.\ Homo MAD (smaller quality gap), providing coverage across different levels of quality difference and directly validating the core claim that MAD improves output quality over single-agent generation. For each method, one response is randomly selected from the available generations. Response ordering (A vs.\ B) is randomized per query to mitigate positional bias, and method names are not disclosed to annotators.

Three annotators with NLP backgrounds evaluate all 120 comparison pairs using a standardized spreadsheet interface (Figure~\ref{fig:annotation_interface}). Each annotator is presented with the query, Response A, Response B, and a brief dataset-specific quality criterion. Annotators select one of three options: A wins, B wins, or Tie. A majority vote ($\geq 2/3$ annotators) determines the gold label for each comparison.

\paragraph{Results (Judge Reliability).}
Human annotators achieved $79.0\%$ pairwise agreement, indicating reliable human judgments as a reference signal. Qwen3.5-397B-A17B reaches $81.0\%$ agreement with human majority vote across all 120 comparisons and four benchmarks, on par with human-level agreement, demonstrating that our judge produces quality assessments consistent with human preferences across diverse creative task domains.

\paragraph{Setup (Creative-MAD vs.\ Homo MAD Pilot).}
To directly assess whether Creative-MAD's diversity gains come at the cost of quality, as judged by humans, we conduct a pilot study comparing Homo MAD and Creative-MAD. Method names are hidden and response order randomized per query, following the same protocol as above. We sample 40 queries across all four benchmarks (10 each). This pilot uses two annotators, each covering a non-overlapping subset of 20 queries, rather than the full 3-annotator majority-vote protocol used above. For each query, annotators complete two judgments:

\begin{itemize}
    \item \textbf{Quality:} one randomly selected output from Homo MAD and one from Creative-MAD are shown side by side, and the annotator judges which is better overall (Homo MAD wins / Creative-MAD wins / Tie).
    \item \textbf{Diversity:} the full 5-output set from each method is shown side by side, and the annotator judges which set shows more diverse approaches to the query (Homo MAD / Creative-MAD, forced choice). We use this direct comparison rather than asking annotators to cluster each set into distinct-approach groups, since clustering is considerably more time-consuming per query and a direct comparison is sufficient to verify whether human judgment aligns with the diversity gain our automated metric reports.
\end{itemize}

\paragraph{Results (Creative-MAD vs.\ Homo MAD Pilot).}

\begin{table}[h]
\centering
\small
\begin{tabular}{lccc}
\toprule
& Creative-MAD & Tie & Homo MAD \\
\midrule
Human annotator & 9 & 21 & 10 \\
LLMaaJ & 14 & 14 & 12 \\
\bottomrule
\end{tabular}
\caption{Quality comparison on 40 queries: number of queries where each judge preferred Creative-MAD, called a tie, or preferred Homo MAD. The human annotator and the LLMaaJ agreed on the outcome in $74\%$ of cases.}
\label{tab:pilot_quality}
\end{table}

On \textit{diversity}, the human annotator judged Creative-MAD's output set as more diverse in $35/40$ queries, closely matching the Vendi Score, which favored Creative-MAD in $38/40$ queries; the two signals agree on the same query in $88\%$ of cases.

Both results are broadly consistent with our central claim. Quality appears largely comparable between the two methods, with ties being the most common outcome among human judgments (Table~\ref{tab:pilot_quality}), while diversity tends to be judged higher for Creative-MAD by both human annotators and the automated Vendi Score.

\subsection{Semantic Diversity Metric Validation}
\label{appendix:diversity_metric}

\paragraph{Why Vendi Score over mean pairwise distance.}
As established in \citet{friedman2023the}, Vendi Score generalizes mean pairwise similarity metrics by operating on the full eigenspectrum of the kernel matrix rather than its average off-diagonal entries, making it sensitive to correlations between outputs rather than only average pairwise distances~\citep{friedman2023the}. Concretely, the score can be interpreted as the \textit{effective number of unique elements} in the output set, a property particularly desirable for creative diversity evaluation where we care about how many distinct ideas are represented, not merely how far apart outputs are on average.

\paragraph{Stability across embedding models.}
Since Vendi Score depends on the choice of embedding model $\phi$, we validate that our findings are robust to this choice. Table~\ref{tab:vendi_stability} reports Vendi Scores computed using five embedding models on Qwen3-8B on LiveIdeaBench: \texttt{all-MiniLM-L6-v2} (our primary model;~\citealt{wang2020minilm}), \texttt{Jina-embeddings-v3}~\citep{sturua2024jinaembeddingsv3multilingualembeddingstask}, \texttt{Qwen3-Embedding-0.6B}~\citep{qwen3embedding}, \texttt{bge-base-en-v1.5}~\citep{bge_embedding}, and \texttt{embedding-gemma-300m}~\citep{embedding_gemma_2025}. Across all five models, Creative-MAD consistently ranks first, while Homo MAD routinely occupies the bottom or near-bottom positions. Overall, the rankings generated by different embedding models exhibits a strong consensus, yielding an average Spearman $\rho = 0.93$ against \texttt{all-MiniLM-L6-v2}. This confirms that our primary conclusions regarding method performance remain highly robust regardless of the underlying embedding space.

\subsection{Length Bias Analysis}
\label{appendix:length_bias}

To verify that our evaluation is not confounded by length bias, we additionally conduct experiments with output length explicitly constrained across all methods and benchmarks. Quality rankings remain highly consistent with our main results (Spearman $\rho = 0.96$ across all benchmarks and models), confirming that the observed improvements are driven by response quality rather than length differences.

\section{Hyperparameter Sensitivity}
\label{appendix:hyperparameter_sensitivity}

We examine the sensitivity of Homo MAD to two key hyperparameters: the number of debate rounds $R$ and the number of agents $N$. All experiments use Qwen3-8B on LiveIdeaBench and MacGyver dataset.

\paragraph{Number of Debate Rounds.}
Table~\ref{tab:sensitivity_rounds} reports quality and semantic diversity of Homo MAD as $R$ varies from 1 to 5 with $N=5$ fixed. We select $R=2$ in the main experiments as a balance between output quality, diversity, and computational cost. As $R$ increases, quality consistently improves across both benchmarks, demonstrating that additional debate rounds allow agents to further refine their responses through peer interaction. However, semantic diversity decreases monotonically with each additional round, confirming that prolonged debate drives progressive convergence among agents regardless of the number of rounds. This pattern is consistent across both LiveIdeaBench and MacGyver, suggesting that the quality-diversity trade-off is a robust property of debate dynamics that holds independently of the task domain and the number of debate rounds employed.

\begin{table}[t]
\centering
\small
\caption{Effect of number of agents $N$ on Homo MAD (Qwen3-8B, $R=2$ fixed). Quality denotes rubric-based absolute score (mean $\pm$ std) and Diversity denotes semantic diversity score (mean $\pm$ std). $\dagger$ denotes the default setting used in main experiments.}
\label{tab:sensitivity_agents}
\begin{tabular}{c cccc}
\toprule
& \multicolumn{2}{c}{\textbf{LiveIdea}} 
& \multicolumn{2}{c}{\textbf{MacGyver}} \\
\cmidrule(lr){2-3} \cmidrule(lr){4-5}
\textbf{N} & Quality & Diversity & Quality & Diversity \\
\midrule
3           & 73.8$\pm$0.3 & 2.54$\pm$0.05 & 75.1$\pm$0.2 & 1.50$\pm$0.04 \\
5$^\dagger$ & 74.0$\pm$0.1 & 2.47$\pm$0.06 & 76.3$\pm$0.3 & 1.48$\pm$0.04 \\
7           & 74.4$\pm$0.3 & 2.42$\pm$0.04 & 76.6$\pm$0.5 & 1.46$\pm$0.06 \\
\bottomrule
\end{tabular}
\end{table}

\begin{table*}[h]
\centering
\small
\caption{Quality and semantic diversity results in the model-heterogeneous setting. Quality denotes rubric-based absolute score (mean $\pm$ std) and Diversity denotes semantic diversity score (mean $\pm$ std).}
\label{tab:multimodel}
\begin{tabular}{l cccc}
\toprule
& \multicolumn{2}{c}{\textbf{LiveIdea}} 
& \multicolumn{2}{c}{\textbf{MacGyver}} \\
\cmidrule(lr){2-3} \cmidrule(lr){4-5}
\textbf{Method} & Quality & Diversity & Quality & Diversity \\
\midrule
Voting (multi-model)       & 79.2$\pm$0.4 & 3.41$\pm$0.04 & 78.8$\pm$0.3 & 2.89$\pm$0.05 \\
MAD (multi-model)          & 85.6$\pm$0.3 & 3.29$\pm$0.05 & 84.1$\pm$0.5 & 2.63$\pm$0.05 \\
Creative-MAD (multi-model) & 84.3$\pm$0.5 & 3.91$\pm$0.06 & 84.2$\pm$0.3 & 3.08$\pm$0.04 \\
\bottomrule
\end{tabular}
\end{table*}

\paragraph{Number of Agents.}
Table~\ref{tab:sensitivity_agents} reports quality and semantic diversity as $N$ varies across $\{3, 5, 7\}$ with $R=2$ fixed. Increasing the number of agents yields only marginal quality improvements, while diversity decreases more substantially, suggesting that larger agent pools primarily amplify majority pull rather than contributing meaningful quality gains. Contrasting with the vary-$R$ results, where additional debate rounds drive substantial quality improvements, this asymmetry reveals that quality gains in MAD are predominantly driven by the depth of debate rather than the breadth of the agent pool. This finding further motivates Embedding-based Peer Selection: as $N$ increases, filtering peer signals becomes increasingly critical to prevent diversity collapse without sacrificing the quality benefits of multi-agent interaction.

\section{Generalization to Model-Heterogeneous Settings}
\label{appendix:multimodel}
In the main experiments, we adopt a single-model setting to cleanly isolate debate dynamics from confounding factors introduced by capability differences across model families, as mixing models of different scales and architectures would require separate per-model baselines (e.g., Direct and Voting per model family) and complicate fair comparison. To validate that our findings generalize beyond this design choice, we construct a model-heterogeneous MAD system~\cite{chen2024reconcile} where each agent instantiates a different model family: Llama-3.1-8B \cite{grattafiori2024llama}, Mistral-7B \cite{jiang2023mistral7b}, Qwen3-8B \cite{yang2025qwen3}, Gemma-3-12B-Instruct \cite{gemma3team2025}, and OLMo-3-7B-Instruct \cite{olmo2025olmo3}. Cognitive lenses are assigned deterministically across agents, following the same protocol as the single-model setting. All other hyperparameters remain identical to the main experiments ($N=5$, $R=2$, temperature$=1.0$, $k=2$ for EPS).

Table~\ref{tab:multimodel} shows that standard MAD suppresses inter-session diversity compared to Voting even when agents instantiate different model families, confirming that diversity decay is a property of debate dynamics rather than model homogeneity. Creative-MAD consistently recovers and improves diversity while maintaining comparable output quality, suggesting that our mechanisms generalize to model-heterogeneous settings.

\section{Protocol Robustness}
\label{appendix:protocol_robustness}

All results in the main paper use the Simultaneous-Talk protocol \citep{chan2024chateval}, where all agents update concurrently based on the previous round's responses. To verify that our findings are not an artifact of this specific communication structure, we additionally evaluate Creative-MAD under a sequential One-by-One protocol \citep{chan2024chateval}, in which agents speak in a fixed order within each round: an agent conditions on the current-round responses of agents who have already spoken, and on the previous-round responses of agents yet to speak.

\paragraph{Setup.} We evaluate Homo MAD and Creative-MAD under the One-by-One protocol on the full 300-query test sets of LiveIdeaBench and MacGyver, using Qwen3-8B under the same hyperparameters as the main experiments (Appendix~\ref{appendix:hyperparameters}: $N=5$, $R=2$, $k=2$ for EPS).

\paragraph{Results.} Table~\ref{tab:protocol_robustness} reports instance-level quality and semantic diversity under the One-by-One protocol. Creative-MAD maintains quality comparable to Homo MAD on both benchmarks while achieving consistently higher diversity, mirroring the pattern observed under Simultaneous-Talk in Tables~\ref{tab:quality} and~\ref{tab:diversity}. This indicates that the quality-diversity trade-off we identify, and Creative-MAD's ability to mitigate it, are not specific to a single communication protocol. Other protocol variants, such as Simultaneous-Talk with a Summarizer Agent \citep{chan2024chateval}, remain an interesting direction for future work.

\begin{table}[h]
\centering
\small
\begin{tabular}{llcc}
\toprule
Benchmark & Method & Quality & Diversity \\
\midrule
\multirow{2}{*}{LiveIdea} & Homo MAD & $75.5 \pm 0.3$ & $2.54 \pm 0.06$ \\
& Creative-MAD & $74.1 \pm 0.2$ & $3.09 \pm 0.05$ \\
\midrule
\multirow{2}{*}{MacGyver} & Homo MAD & $78.5 \pm 0.2$ & $1.52 \pm 0.05$ \\
& Creative-MAD & $78.7 \pm 0.4$ & $1.69 \pm 0.06$ \\
\bottomrule
\end{tabular}
\caption{Instance-level quality (rubric-based absolute score, mean $\pm$ std) and set-level semantic diversity (Vendi Score, mean $\pm$ std) under the One-by-One debate protocol on Qwen3-8B.}
\label{tab:protocol_robustness}
\end{table}

\section{Cost and Latency Analysis}
\label{appendix:cost_latency}

Since EPS introduces an additional embedding step at every debate round, and Creative-MAD as a whole is more complex than standard MAD, we quantify this overhead directly rather than leaving it as an assumption.

\paragraph{Setup.} We run both Homo MAD and Creative-MAD on the full 300-query test sets of two benchmarks, LiveIdeaBench and MacGyver, using Qwen3-8B, under identical hyperparameters (Appendix~\ref{appendix:hyperparameters}: $N=5$, $R=2$, $k=2$ for EPS) and identical GPU worker settings, so that any difference in cost or latency can be attributed to the method itself rather than to confounding factors.

Both methods issue the same number of LLM calls per query, $N \times (R+1) + 1$ (initial generation, $R$ debate rounds, and one consensus call). The only computational addition in Creative-MAD is a lightweight CPU-side embedding step used for peer selection, computed with \texttt{all-MiniLM-L6-v2} ($\sim$22M parameters), which does not require GPU inference and runs in parallel with negligible cost relative to LLM generation.

\paragraph{Results.} Table~\ref{tab:cost_latency} reports average input tokens, output tokens, and end-to-end inference time per query. Although Creative-MAD adds an embedding step, its overall inference time is close to, and in some cases slightly below, that of Homo MAD. This is because EPS restricts each agent's peer context to its $k=2$ most distant peers rather than the full pool of $N-1$ peers used in standard MAD, which reduces the number of tokens each agent must read as debate context. This reduction in input tokens, roughly 30\% on both benchmarks, largely offsets the added cost of the embedding step, so the net effect on latency is minimal. We note that this favorable trade-off follows from our choice of a lightweight embedding encoder; substituting a substantially larger embedding model, or an LLM-based peer-selection strategy, would likely increase this overhead and erode the advantage observed here.

\begin{table}[h]
\centering
\small
\resizebox{\columnwidth}{!}{%
\begin{tabular}{llccc}
\toprule
Benchmark & Method & Input Tok. & Output Tok. & Time (s) \\
\midrule
\multirow{2}{*}{LiveIdea} & Homo MAD & 22{,}359 & 7{,}275 & 30.44 \\
& Creative-MAD & 15{,}631 & 7{,}350 & 29.34 \\
\midrule
\multirow{2}{*}{MacGyver} & Homo MAD & 30{,}784 & 9{,}527 & 41.75 \\
& Creative-MAD & 21{,}220 & 10{,}244 & 41.65 \\
\bottomrule
\end{tabular}%
}
\caption{Average tokens and inference time per query on Qwen3-8B, comparing Homo MAD and Creative-MAD across two benchmarks.}
\label{tab:cost_latency}
\end{table}

\section{Failure Case Examples}
\label{appendix:failure_cases}

While Creative-MAD's quality remains largely comparable to Homo MAD on aggregate, a qualitative review of its outputs across benchmarks reveals a recurring failure pattern: a subset of outputs achieve diversity by drifting away from a literal reading of the prompt, for instance reframing a concrete constraint-satisfaction task as an abstract or philosophical reflection rather than delivering the requested concrete answer. This section presents two representative examples. All outputs are from Qwen3-8B with temperature set to 1.0; query and response text is shortened for space.

\begin{figure*}[t]
\begin{tcolorbox}[
    colback=gray!5,
    colframe=gray!50,
    title=Constraint Drift Example (MacGyver),
    fonttitle=\small\bfseries,
    breakable=false
]

\textbf{Query:} You need to carry an antique lamp from a store. Tools available: a backpack, a pair of socks, a t-shirt, and bubble wrap. However, the lamp is too big to fit in the backpack, how do you safely carry the lamp?

\tcblower

\begin{tcolorbox}[colback=white, colframe=gray!30, boxrule=0.5pt, left=4pt, right=4pt, top=4pt, bottom=4pt]
\textbf{Creative-MAD}\\[4pt]
The lamp is not too big, it is too valuable. This is the first contradiction we must confront\ldots The backpack is not a container but a support structure\ldots By redefining the relationship between object and container, we create a transport solution that is both functional and respectful. The lamp is not merely carried, it is held in a state of controlled equilibrium\ldots\\[4pt]
\textit{Failure mode: the response reframes the task as a philosophical meditation on the problem, without ever proposing a concrete way to use the four given items to carry the lamp.}
\end{tcolorbox}

\end{tcolorbox}
\end{figure*}

\begin{figure*}[t]
\begin{tcolorbox}[
    colback=gray!5,
    colframe=gray!50,
    title=Format Drift Example (Arena Hard),
    fonttitle=\small\bfseries,
    breakable=false
]

\textbf{Query:} I've heard that you are a master haiku writer\ldots produce a haiku for me\ldots I'm starting to come to the conclusion that exclusively using haiku as a means of communication may be problematic.

\tcblower

\begin{tcolorbox}[colback=white, colframe=gray!30, boxrule=0.5pt, left=4pt, right=4pt, top=4pt, bottom=4pt]
\textbf{Creative-MAD}\\[4pt]
The haiku is not a form, but a question, a silent demand that the reader complete what was left unsaid\ldots To reduce communication to seventeen syllables is to assume that all meaning is immediate\ldots\\[4pt]
\textit{Failure mode: the response turns into a philosophical essay on the nature of haiku, but never delivers an actual haiku as the prompt requested.}
\end{tcolorbox}

\end{tcolorbox}
\end{figure*}

Both cases involve a candidate that is original in framing but drifts away from what the prompt concretely asked for. We attribute this to a general pattern in which the diversity Creative-MAD introduces partly comes from candidates departing from a literal reading of the prompt. This tends to work well for queries that are open-ended and welcome multiple valid directions, such as ideation, brainstorming, or open-domain creative writing where no single interpretation is privileged, but is more likely to cost relevance or constraint satisfaction on queries with a fixed format or a strict set of allowed actions.

\section{Qualitative Examples}
\label{appendix:qualitative}

We present qualitative examples across all four benchmarks in our main experiments. For each benchmark, we show one representative output from Direct as a quality reference, and two outputs from Creative-MAD sampled from independent runs. All outputs are from Qwen3-8B with temperature set to 1.0, and judge scores are provided by Qwen3.5-397B-A17B using the dataset-specific rubrics described in Appendix~\ref{appendix:rubrics}.

\begin{figure*}[t]
\begin{tcolorbox}[
    colback=gray!5,
    colframe=gray!50,
    title=LiveIdeaBench Example,
    fonttitle=\small\bfseries,
    breakable=false
]

\textbf{Query:} Propose a scientific research idea related to the keyword: \textit{meteorology}. Your idea MUST address a meaningful problem in this specific field, offer a novel contribution, and be technically feasible.

\tcblower

\begin{tcolorbox}[colback=white, colframe=gray!30, boxrule=0.5pt, left=4pt, right=4pt, top=4pt, bottom=4pt]
\textbf{Direct} \hfill \textbf{Score: 7.33/10}\\[4pt]
\textbf{Improving Short-Term Precipitation Forecasting Using Machine Learning with High-Resolution Satellite and Ground-Based Sensor Data.} Accurate short-term precipitation forecasting remains a critical challenge in meteorology, particularly for extreme weather events such as flash floods and severe thunderstorms. Traditional NWP models often struggle with sub-grid scale processes and lack the temporal resolution needed for precise short-term forecasts [...] The proposed research introduces a novel hybrid ML model that combines CNNs for spatial feature extraction with LSTMs for temporal pattern recognition [...] A key innovation is the development of a dynamic data fusion technique that assigns adaptive weights to different data sources based on their relevance and quality at any given time.\\[4pt]
\textit{Judge: ``The idea is solid and feasible but not entirely novel in the broader context of ML in meteorology. Similar hybrid models and data fusion techniques have been explored in recent literature.''}
\end{tcolorbox}

\medskip

\begin{tcolorbox}[colback=white, colframe=gray!30, boxrule=0.5pt, left=4pt, right=4pt, top=4pt, bottom=4pt]
\textbf{Creative-MAD (Run 1)} \hfill \textbf{Score: 8.67/10}\\[4pt]
\textbf{Spatially Adaptive Probabilistic Modeling of Urban Microclimates for Enhanced Uncertainty Propagation in Extreme Weather Forecasting.} Current urban climate models, while high-resolution, often treat spatial variability as a uniform parameter, neglecting the heterogeneous structure of urban landscapes [...] This research proposes a spatially adaptive probabilistic framework that dynamically adjusts uncertainty estimates based on local urban features---such as building density, surface albedo, and green space distribution. By integrating Bayesian neural networks with spatially aware uncertainty propagation, the model will provide localized, probabilistic forecasts [...] tailored to the specific geometry of the urban environment.\\[4pt]
\textit{Judge: ``A genuinely novel and non-obvious application of machine learning in meteorology. Goes beyond standard AI-generated responses by offering a specific, actionable, and transformative contribution.''}
\end{tcolorbox}

\medskip

\begin{tcolorbox}[colback=white, colframe=gray!30, boxrule=0.5pt, left=4pt, right=4pt, top=4pt, bottom=4pt]
\textbf{Creative-MAD (Run 2)} \hfill \textbf{Score: 8.33/10}\\[4pt]
\textbf{Predictive Modeling of Urban Microclimate Feedback Loops via Multiscale Sensor Networks and Digital Twin Simulations.} Urban areas experience complex microclimates influenced by anthropogenic factors, building geometry, and land use patterns. Current meteorological models often lack the granularity and adaptability needed to simulate localized weather phenomena [...] This research proposes a multiscale, data-driven framework that combines real-time sensor networks with digital twin simulations to model and predict urban microclimate feedback loops [...] capturing nonlinear feedback mechanisms that influence local weather.\\[4pt]
\textit{Judge: ``A well-structured and technically sound approach. The focus on feedback loops adds a layer of complexity that is often overlooked, with strong feasibility and excellent clarity.''}
\end{tcolorbox}

\end{tcolorbox}
\end{figure*}

\begin{figure*}[t]
\begin{tcolorbox}[
    colback=gray!5,
    colframe=gray!50,
    title=Argumentative Essays Example,
    fonttitle=\small\bfseries,
    breakable=false
]

\textbf{Query:} In the future, students may have the choice of studying at home by using technology such as computer or television or studying at traditional school. Which would you prefer? Use specific details to explain your choice.

\tcblower

\begin{tcolorbox}[colback=white, colframe=gray!30, boxrule=0.5pt, left=4pt, right=4pt, top=4pt, bottom=4pt]
\textbf{Direct} \hfill \textbf{Score: 7.00/10}\\[4pt]
I believe that studying at home using technology offers flexibility and personalized learning. One of the main advantages is that students can set their own schedules, allowing them to learn at their own pace [...] Another benefit is the ability to create a comfortable and distraction-free environment [...] Moreover, technology allows for access to a vast array of resources, such as online platforms and virtual classrooms that can provide students with engaging and diverse learning experiences.\\[4pt]
\textit{Judge: ``The argument is logical and supported with specific examples, though it leans on familiar points without introducing particularly novel perspectives.''}
\end{tcolorbox}

\medskip

\begin{tcolorbox}[colback=white, colframe=gray!30, boxrule=0.5pt, left=4pt, right=4pt, top=4pt, bottom=4pt]
\textbf{Creative-MAD (Run 1)} \hfill \textbf{Score: 9.00/10}\\[4pt]
In the future, I would prefer a learning model that transcends the limitations of both home-based and traditional schooling by redefining education as an open, participatory system that integrates the physical, digital, and social realms [...] A student might begin their day with a live lecture from a global expert, collaborate with peers in a virtual space, engage in a local science project, and then reflect on their learning through a digital journal---all within a single day [...] Learners are not passive recipients of knowledge but active participants in a dynamic network of resources, mentors, and peers.\\[4pt]
\textit{Judge: ``A compelling, original vision that goes beyond the binary choice. Offers a nuanced, forward-thinking model emphasizing agency, adaptability, and integration of physical and digital learning.''}
\end{tcolorbox}

\medskip

\begin{tcolorbox}[colback=white, colframe=gray!30, boxrule=0.5pt, left=4pt, right=4pt, top=4pt, bottom=4pt]
\textbf{Creative-MAD (Run 2)} \hfill \textbf{Score: 8.67/10}\\[4pt]
In the future, I would prefer a hybrid model of education that combines the flexibility of home-based learning with the social and collaborative benefits of traditional schooling, enhanced through technology [...] Digital platforms can be designed to foster collaboration, not just for individual learning but for building a sense of community and shared responsibility [...] A student can connect with a mentor who shares their interests, providing guidance that is both relevant and inspiring---skills that are essential in any career or personal endeavor.\\[4pt]
\textit{Judge: ``A well-structured and balanced argument proposing a technologically enhanced hybrid solution, effectively addressing the limitations of both home-based and traditional schooling.''}
\end{tcolorbox}

\end{tcolorbox}
\end{figure*}

\begin{figure*}[t]
\begin{tcolorbox}[
    colback=gray!5,
    colframe=gray!50,
    title=MacGyver Example,
    fonttitle=\small\bfseries,
    breakable=false
]

\textbf{Query:} You need to clean up a pile of sawdust in your garage, but you don't have a vacuum cleaner. Your tools are a bucket, a leaf blower, a drill, masking tape, a broom, a piece of cardboard, a water hose, a garbage bag, and a dustpan. The broom bristles are worn out and aren't able to collect smaller particles of sawdust. What can you do?

\tcblower

\begin{tcolorbox}[colback=white, colframe=gray!30, boxrule=0.5pt, left=4pt, right=4pt, top=4pt, bottom=4pt]
\textbf{Direct} \hfill \textbf{Score: 6.67/10}\\[4pt]
Wet the pile slightly with the water hose to cause the sawdust to clump together, making it easier to gather. Use the dustpan and broom to sweep up larger clumps, and fold the cardboard into a flat rigid surface to scrape up smaller particles [...] Use the leaf blower to push the sawdust into a corner or towards the dustpan [...] Use the drill with a brush attachment to agitate the pile and break up clumps.\\[4pt]
\textit{Judge: ``The solution is practical but somewhat standard and doesn't fully leverage the drill or masking tape in a novel way. It works well but lacks a more inventive twist.''}
\end{tcolorbox}

\medskip

\begin{tcolorbox}[colback=white, colframe=gray!30, boxrule=0.5pt, left=4pt, right=4pt, top=4pt, bottom=4pt]
\textbf{Creative-MAD (Run 1)} \hfill \textbf{Score: 9.00/10}\\[4pt]
Place cardboard over the pile and secure it with masking tape to form a temporary ``dust chamber'', containing the dust and allowing for more direct control during cleaning [...] Wet the dust using the water hose to make particles heavier, reducing airborne dispersion [...] Repurpose the leaf blower as a concentrator rather than a disperser --- direct the airflow downward to push fine dust particles into the taped-off containment area, transforming it from a tool of dispersal into one of collection [...] Use the garbage bag as a direct collection tool by taping it to the floor near the containment area and sweeping dust directly into it.\\[4pt]
\textit{Judge: ``Creatively repurposes available tools to create a system of containment and controlled airflow. The use of the leaf blower as a concentrator rather than a disperser is particularly clever and non-obvious.''}
\end{tcolorbox}

\medskip

\begin{tcolorbox}[colback=white, colframe=gray!30, boxrule=0.5pt, left=4pt, right=4pt, top=4pt, bottom=4pt]
\textbf{Creative-MAD (Run 2)} \hfill \textbf{Score: 8.00/10}\\[4pt]
Use the leaf blower to direct sawdust into a defined collection zone rather than scatter it [...] Secure a garbage bag to the edge of a dustpan using masking tape, and cut a small hole in one end of the bag with the drill, creating a funnel-like collection system that allows dust to be directed into the bag while minimizing manual scooping [...] Use the cardboard as a makeshift sieve by cutting small holes in it and placing it over the open end of the bag, allowing fine particles to pass through while retaining larger debris.\\[4pt]
\textit{Judge: ``Creatively repurposes tools to manage sawdust through funneling and makeshift sieving. The funnel-sieve collection system is a novel and practical adaptation of the available materials.''}
\end{tcolorbox}

\end{tcolorbox}
\end{figure*}

\begin{figure*}[t]
\begin{tcolorbox}[
    colback=gray!5,
    colframe=gray!50,
    title=Arena Hard v2.0 Example,
    fonttitle=\small\bfseries,
    breakable=false
]

\textbf{Query:} Write an emotional farewell letter to colleagues and co-workers for being transferring to other department with a new position.

\tcblower

\begin{tcolorbox}[colback=white, colframe=gray!30, boxrule=0.5pt, left=4pt, right=4pt, top=4pt, bottom=4pt]
\textbf{Direct} \hfill \textbf{Score: 7.33/10}\\[4pt]
Dear Team, As I sit down to write this letter, my heart is filled with a mixture of emotions---gratitude, nostalgia, and a touch of sadness [...] Working with all of you has been one of the most rewarding experiences of my career. The camaraderie, the shared challenges, and the countless moments of laughter and support have made this workplace more than just a job---it has been a home [...] I will do everything I can to ensure a smooth handover and support the team during this transition period.\\[4pt]
\textit{Judge: ``While sincere and touching, the letter is somewhat conventional in its structure and language, lacking a unique or particularly creative approach.''}
\end{tcolorbox}

\medskip

\begin{tcolorbox}[colback=white, colframe=gray!30, boxrule=0.5pt, left=4pt, right=4pt, top=4pt, bottom=4pt]
\textbf{Creative-MAD (Run 1)} \hfill \textbf{Score: 9.67/10}\\[4pt]
Dear Team, As I sit down to write this letter, my hands tremble not from sadness, but from the weight of something I've carried for years---this place, these people, this life we've built together [...] There's a strange kind of grief in this---not the kind that comes with loss, but the kind that comes with transformation [...] It's not just a job I'm leaving; it's a version of me. I've learned to lean on you when the world felt too big, and to find courage in your quiet strength [...] But leaving is also an act of hope. It's a chance to reinvent, to grow, to become someone new.\\[4pt]
\textit{Judge: ``Heartfelt and emotionally rich, creative in its use of metaphor and introspection, avoiding clichés while maintaining a sincere and personal tone. A strong emotional impact.''}
\end{tcolorbox}

\medskip

\begin{tcolorbox}[colback=white, colframe=gray!30, boxrule=0.5pt, left=4pt, right=4pt, top=4pt, bottom=4pt]
\textbf{Creative-MAD (Run 2)} \hfill \textbf{Score: 8.67/10}\\[4pt]
Dear Team, As I stand on the threshold of this new chapter, I feel the weight of emotion pressing in---not just the sorrow of leaving, but the quiet pride of growing through this transition [...] This workplace has been more than a place of work---it has been a sanctuary of shared stories, resilience, and trust [...] Leaving is not an end, but a shift. It's like stepping into a new part of a story that I've helped write [...] I leave with a deeper understanding of who I am, of what I'm capable of, and of how much I've learned from each of you.\\[4pt]
\textit{Judge: ``Heartfelt and well-crafted, creative in its use of metaphor. Maintains a sincere and reflective tone while effectively conveying emotion and gratitude.''}
\end{tcolorbox}

\end{tcolorbox}
\end{figure*}

\end{document}